\newcommand{\cmark}{\ding{51}}
\newcommand{\xmark}{\ding{55}}
\def\name{\textsc{Hot}}
\def\T{{\scriptscriptstyle\mathsf{T}}}
\newtheorem{problem}{Problem}
\newtheorem{definition}{Definition}
\renewcommand{\maketag@@@}[1]{\hbox{\m@th\normalsize\normalfont#1}}%
\setlist[itemize]{leftmargin=5.5mm}
\newcommand{\hidec}[1]{}
\newcommand{\beqa}{\begin{eqnarray}}
\newcommand{\eeqa}{\end{eqnarray}}
\newcommand{\beq}{\begin{equation}}
\newcommand{\eeq}{\end{equation}}
\newcommand{\ben}{\begin{enumerate}}
\newcommand{\een}{\end{enumerate}}
\newcommand{\bit}{\begin{itemize}}
\newcommand{\eit}{\end{itemize}}
\newcommand{\bi}{\begin{itemize} \item}
\newcommand{\ei}{\end{itemize}}
\newcommand{\begindef}{\begin{Definition} \rm}
\newcommand{\beginexa}{\begin{Example} \rm}
\newcommand{\beginthe}{\begin{Theorem} \rm}
\newcommand{\beginpro}{\begin{Proposition} \rm}
\newcommand{\beginlem}{\begin{Lemma} \rm}
\newcommand{\begincon}{\begin{Conjecture} \rm}
\newcommand{\begincor}{\begin{Corollary} \rm}
\newcommand{\mat}[1]{{\bf #1}}   
\newcommand{\eat}[1]{}
\def\papernumber #1 raised #2 {
\vspace{-#2}
\vbox to 0pt{\hfill\framebox{\bf Paper Number #1}}
\vspace{#2}
}
\newcommand{\ts}{\bm{\mathcal{S}}}
\newcommand{\tc}{\bm{\mathcal{C}}}
\newcommand{\tl}{\bm{\mathcal{L}}}
\newcommand{\G}{\mathcal{G}}
\newcommand{\C}{\mathcal{C}}
\DeclareMathOperator*{\argmax}{arg\,max}
\DeclareMathOperator*{\argmin}{arg\,min}
\title{Hierarchical Multi-Marginal Optimal Transport for Network Alignment}
\author {
    Zhichen Zeng\textsuperscript{\rm 1},
    Boxin Du\textsuperscript{\rm 2},
    Si Zhang\textsuperscript{\rm 3},
    Yinglong Xia\textsuperscript{\rm 3},
    Zhining Liu\textsuperscript{\rm 1},
    Hanghang Tong\textsuperscript{\rm 1}
}
\begin{document}

\maketitle

\begin{abstract}

Finding node correspondence across networks, namely multi-network alignment, is an essential prerequisite for joint learning on multiple networks. Despite great success in aligning networks in pairs, the literature on multi-network alignment is sparse due to the exponentially growing solution space and lack of high-order discrepancy measures. To fill this gap, we propose a \underline{h}ierarchical multi-marginal \underline{o}ptimal \underline{t}ransport framework named \name\ for multi-network alignment. To handle the large solution space, multiple networks are decomposed into smaller aligned clusters via the fused Gromov-Wasserstein (FGW) barycenter. To depict high-order relationships across multiple networks, the FGW distance is generalized to the multi-marginal setting, based on which networks can be aligned jointly. A fast proximal point method is further developed with guaranteed convergence to a local optimum. Extensive experiments and analysis show that our proposed \name\ achieves significant improvements over the state-of-the-art in both effectiveness and scalability.

\end{abstract}
\section{INTRODUCTION}\label{sec:intro}
In the era of big data, networks often originate from various domains. Joint learning on multiple networks has shown promising results in various areas including high-order recommendation~\cite{yan2022dissecting}, fraud detection~\cite{du2021new} and fact checking~\cite{liu2021kompare}. A critical steppingstone behind these tasks and many more is the multi-network alignment problem, which aims to find node correspondence across multiple networks.

To date, a multitude of pairwise network alignment methods have been developed based on the consistency principle~\cite{singh2008global,koutra2013big,zhang2016final}, node embedding~\cite{li2019adversarial,chu2019cross,zhang2021balancing}, and optimal transport (OT)~\cite{petric2019got,maretic2022fgot,chen2020graph,zeng2023parrot} with superior performance, but this is not the case for the multi-network setting due to two fundamental challenges. First (\textit{discrepancy measure}), most existing pairwise methods essentially optimize the pairwise discrepancy (e.g., Frobenius norm~\cite{zhang2016final}, contrastive loss~\cite{chu2019cross}, and Wasserstein distance~\cite{maretic2020wasserstein}) between one network and its aligned counterpart, but a similar discrepancy measure for multi-network is lacking. Second (\textit{algorithm}), even equipped with a proper discrepancy measure, an efficient algorithm is demanded to handle the significantly larger solution space of multi-network alignment, compared with its pairwise counterpart.

\noindent\textbf{Contributions.} In this paper, we propose a novel method named \name\ to address the above challenges from the view of multi-marginal optimal transport (MOT)~\cite{pass2015multi}. To jointly measure the discrepancy between multiple networks, the fused Gromov-Wasserstein (FGW) distance is generalized to the multi-marginal setting, whose by-product, the optimal coupling tensor, naturally serves as the alignment between networks. To handle the large solution space, the problem is decomposed into significantly smaller cluster-level and node-level alignment subproblems. Specifically, the cluster-level alignment for multiple networks is obtained based on the FGW barycenter. On top of that, the multi-marginal FGW (MFGW) distance, together with a position-aware cost tensor generated based on the unified random walk with restart (RWR), is adopted for the node-level alignment. To achieve fast solutions, we propose a proximal point method with guaranteed convergence. Extensive experiments show that \name\ outperforms the best competitor by at least 12.0\% on plain networks in terms of high-order Hits@10, with up to 360$\times$ speedup in time complexity and 1000$\times$ reduction in memory cost compared with the non-hierarchical solution.

The rest of the paper is organized as follows. Section~\ref{sec:prob} introduces the preliminaries and problem definitions. Section~\ref{sec:opt} formulates the optimization problem. Section~\ref{sec:algo} presents and analyzes the proposed algorithm. Experiment results are presented in Section~\ref{sec:exp}. We review related work and conclude our paper in Sections~\ref{sec:related} and \ref{sec:con} respectively.

\section{PROBLEM DEFINITION}\label{sec:prob}

\subsection{Notations}
We use bold uppercase letters for matrices (e.g., $\mathbf{A}$), bold lowercase letters for vectors (e.g., $\mathbf{s}$), calligraphic letters for sets (e.g., $\C$), bold calligraphic letters for tensors (e.g., $\tc$), and lowercase letters for scalars (e.g., $\alpha$).
The element $(i,j)$ of a matrix $\mathbf{A}$ is denoted as $\mathbf{A}(i,j)$, and the element $(i_1,i_2,\ldots,i_K)$ of a tensor $\tc$ is denoted as $\tc(i_1,i_2,\ldots,i_K)$. The transpose of $\mathbf{A}$ is denoted by the superscript $\T$ (e.g., $\mathbf{A}^{\T}$). We use $\Pi(\bm{\mu},\bm{\nu})$ to denote the probabilistic coupling between $\bm{\mu}$ and $\bm{\nu}$, and $\Delta_n=\{\bm{\mu}\in\mathbb{R}_n^+|\sum_{i=1}^n\bm{\mu}(i)=1\}$ to denote a probability simplex with $n$ bins.

For mathematical operations, we use $\odot$ for Hadmard product and $\otimes$ for outer product. We define $\mathcal{P}_{k}(\tc)=\sum_{\{i_1,\dots,i_K\}\setminus\{i_k\}}\tc(i_1,\dots,i_K)$ as the marginal sum of tensor $\tc$ of the $k$-th dimension.

An attributed graph is denoted as $\G=\{\mathbf{A},\mathbf{X}\}$, where $\mathbf{A}$ is the adjacency matrix and $\mathbf{X}$ is the node attribute matrix. We use $n_i$ and $m_i$ to denote the number of nodes and edges in $\G_i$, respectively. Graph indices are indicated by subscripts (e.g., $\G_i$) and cluster indices are indicated by superscripts (e.g., $\C^j$). For a given graph $\G_k$, the $i_k$-th node is denoted as $v_{i_k}$, and the $j$-th cluster is denoted as $\C_k^j$.

Following a common practice in OT-based graph applications~\cite{titouan2019optimal}, an attributed graph can be represented by a probability measure supported on the product space of node attribute and structure, i.e., $\bm{\mu}=\sum_{i=1}^{n}\mathbf{h}(i)\delta_{v_i,\mathbf{X}(v_i)}$, where $\mathbf{h}\in\Delta_{n}$ is a histogram representing the node weight of $v_i\in\G$.
\subsection{Multi-marginal Optimal Transport}
The fused Gromov-Wasserstein (FGW) distance is powerful in processing geometric data by exploring node attributes and graph structure, which is defined as~\cite{titouan2019optimal}:
\begin{definition}\label{def:fgw}
    Fused Gromov-Wasserstein (FGW) distance.\\
    Given two graphs $\G_1=\{\mathbf{A}_1,\mathbf{X}_1\},\G_2=\{\mathbf{A}_2,\mathbf{X}_2\}$ with their probability measures $\bm{\mu}_1,\bm{\mu}_2$ and intra-cost matrices $\mathbf{C}_1,\mathbf{C}_2$ measuring within-graph node relationships, and a cross-cost matrix $\mathbf{C}_{\textup{cross}}$ measuring cross-graph node relationships, the FGW distance $\textup{FGW}_{q,\alpha}(\G_1,\G_2)$ is defined as    

    \begin{equation}\label{eq:fgwd}
    \setlength{\abovedisplayskip}{3pt}
    \setlength{\belowdisplayskip}{3pt}
    \small
        \begin{aligned}
            &\min_{\mathbf{S}\in\Pi(\bm{\mu}_1,\bm{\mu}_2)}(1-\alpha)\sum_{v_1\in\G_1, u_1\in\G_2}\mathbf{C}_{\textup{cross}}^q(v_1,u_1)\mathbf{S}(v_1,u_1)\\
            &+ \alpha \!\!\!\!\!\sum_{v_1,v_2\in\G_1\atop u_1,u_2\in\G_2}\!\!\!\!|\mathbf{C}_1(v_1,v_2)-\mathbf{C}_2(u_1,u_2)|^q\mathbf{S}(v_1,u_1)\mathbf{S}(v_2,u_2)
        \end{aligned}.
    \end{equation}
\end{definition}

The hyperparameter $q$ in Eq.~\eqref{eq:fgwd} is the order of the FGW distance, and we consider $q=2$ for faster computation throughout this paper~\cite{peyre2016gromov}. However, existing FGW distance is only applicable in two-sided OT problems. Based on Definition~\ref{def:fgw}, the FGW distance is generalized to the multi-marginal OT setting as follows: 
\begin{definition}\label{def:mfgw}
Multi-marginal Fused Gromov-Wasserstein (MFGW) distance~\cite{beier2022multi}.\\
Given $K$ graphs $\G_1,\dots,\G_K$ with their probabilistic representations $\bm{\mu}_1,\dots,\bm{\mu}_K$, a cross-cost tensor $\tc\in\mathbb{R}^{n_1\times\dots\times n_K}$ measuring cross-graph node distances based on node attributes, and $K$ intra-cost matrices $\mathbf{C}_k\in\mathbb{R}^{n_k\times n_k},\forall k = 1,\dots,K$ measuring intra-graph node similarity for $\G_k$ based on graph structure. The $q$-MFGW distance $\textup{MFGW}_{q,\alpha}(\G_1,\dots,\G_K)$ is defined as:
\begin{equation}\label{eq:mfgw1}
    \setlength{\belowdisplayskip}{3pt}
    \footnotesize
    \begin{aligned}
        &\min_{\ts\in\Pi(\bm{\mu}_1,\dots,\bm{\mu}_K)}(1-\alpha)\!\!\sum_{v_1,\dots,v_K}\!\!\tc(v_1\!,\!\dots\!,\!v_K)^q\ts(v_1\!,\!\dots\!,\!v_K)+\\
        &\alpha\!\!\!\!\!\sum_{
        \substack{1\leq j,k \leq K\\ v_1,\dots,v_K\\ v_1'\dots,v_K'}}\!\!\!\!\!|\mathbf{C}_j(v_j,\!v_j')\!-\!\mathbf{C}_k(v_k,\!v_k')|^q\ts(v_1\!,\!\dots\!,\!v_K)\ts(v_1'\!,\!\dots\!,\!v_K')
    \end{aligned}
\end{equation}
\end{definition}
Intuitively, the first summation is the Wasserstein term measuring the joint distance for $K$ graphs in terms of node attributes. The second summation is the Gromov-Wasserstein term measuring the structural difference among all node pairs in $K$ graphs weighted by the optimal coupling tensor $\ts$.

\subsection{Hierarchical Multi-network Alignment}
\begin{problem}\label{prob:hierarchical}
Hierarchical multi-network alignment.\label{prob:mna}\\
\textbf{Given:} (1) $K$ attributed networks $\mathcal{G}_i=\{\mathbf{A}_i,\mathbf{X}_i\}$, and (2) a set of anchor node sets $\mathcal{L}$ indicating which nodes are aligned a priori.\\
\textbf{Output:} (1) cluster-level alignment sets $\C^j=\bigcup_{i=1}^K\C_i^j$ for $j=1,\ldots,M$, where $M$ is the number of clusters and $\C_i^j$ is the set of nodes from $\G_i$ that are clustered to the $j$-th cluster, and (2) node-level alignment tensors $\ts^j$ for $\C^j$, whose entry indicates how likely nodes are aligned.
\end{problem}
An illustrative example is shown in Figure~\ref{fig:mna}. Given the anchor node set $\mathcal{L}$, the $1^{st}$ cluster-level alignment $\C^1$ (red circle in the middle figure) consists of clusters $\C_1^1$, $\C_2^1$ and $\C_3^1$, and corresponding $\ts^1$ indicates alignments among nodes in $\C_1^1$, $\C_2^1$ and $\C_3^1$. Note that Problem~\ref{prob:hierarchical} is a generalized version of single-level pairwise network alignment. For example, when $K=2$, the problem degenerates to the hierarchical pairwise alignment problem~\cite{xu2019scalable,zhang2019multilevel}. When $M=1$, the problem degenerates to the single-level multi-network alignment problem~\cite{chu2019cross}.

\begin{figure}
    \centering
    \includegraphics[width = \linewidth]{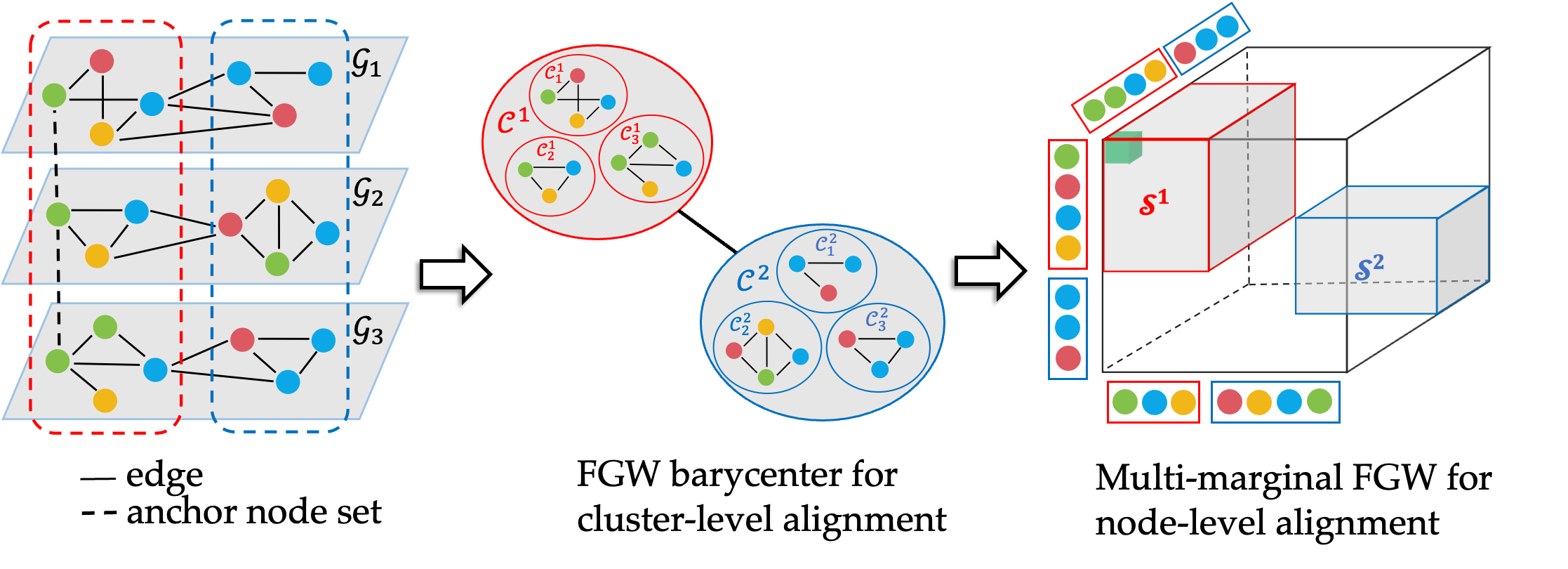}
    \caption{An overview of \name. Left: three input networks, where three green nodes connected by the black dash line form an anchor node set. Middle: FGW barycenter co-clusters three graphs into two clusters. Right: the node alignment tensor with blocks $\ts^1$ for cluster $\C^1$ and $\ts^2$ for cluster $\C^2$.}
    \label{fig:mna}
\end{figure}
\section{OPTIMIZATION FORMULATION}\label{sec:opt}
In this section, we present our hierarchical MOT-based multi-network alignment framework. A position-aware cost tensor is first developed to depict high-order relationships across networks. Then the multi-network alignment problem is formulated as a hierarchical MOT problem, including cluster-level alignmend based on FGW barycenter and node-level alignment based on MFGW distance.

\subsection{Position-Aware Cost Tensor}\label{sec:cost}
Modeling node relationship across multiple networks is essential for multi-network alignment. Consistency-based methods~\cite{du2021sylvester,li2021scalable,zhang2016final} model node relationships by the Kronecker product graph, the size of which becomes intractable for large networks. Embedding-based methods~\cite{heimann2018regal,zhang2020nettrans,zhang2021balancing} generate embedding spaces for different network pairs but suffer from the space disparity issue.

To overcome the above limitations, we adopt the unified RWR to generate position-aware node embeddings in a unified space~\cite{yan2021bright,yan2024pacer}. The idea is to treat nodes in an anchor node set as one identical landmark in the embedding space and construct a unified space by encoding positional information with respect to (w.r.t.) same landmarks. Formally speaking, given the $p$-th anchor node set $\{l_{1_p},\ldots,l_{K_p}\}\in\mathcal{L}$ where $l_{i_p}$ is the anchor node from $\mathcal{G}_i$, the RWR score vector $\mathbf{r}_{i_p}\in\mathbb{R}^{n_i}$ depicting the relative positions of nodes from $\G_i$ w.r.t. $l_{i_p}$ is computed by~\cite{tong2006fast}
\begin{equation}\label{eq:rwr}
    \setlength{\abovedisplayskip}{3pt}
    \setlength{\belowdisplayskip}{3pt}
    \mathbf{r}_{i_p} = (1-\beta)\mathbf{W}_i\mathbf{r}_{i_p}+\beta\mathbf{e}_{i_p},
\end{equation}
where $\beta$ is the restart probability, $\mathbf{W}_i=(\mathbf{D}_i^{-1}\mathbf{A}_i)^{\T}$ is the transpose of the row normalized matrix of $\mathbf{A}_i$, and $\mathbf{e}_{i_p}$ is an $n_i$-dimensional one-hot vector with $\mathbf{e}_{i_p}(l_{i_p})=1$. The final positional embedding is the concatenation of the RWR scores w.r.t. different anchor node sets in $\mathcal{L}$, i.e., $\mathbf{R}_{i}=[\mat r_{i_1}\|\ldots\|\mat r_{i_{|\mathcal{L}|}}]\in\mathbb{R}^{n_i\times |\mathcal{L}|}$.

When node attributes are available, we use the concatenation of node attribute and positional embedding as the node embedding, i.e., $\mathbf{Z}_i = \left[\mathbf{X}_i\|\mathbf{R}_i\right]$ for $\G_i$. Otherwise, we simply use $\mathbf{R}_i$ as the node embedding, i.e., $\mathbf{Z}_i = \mathbf{R}_i$. Given a set of node embeddings $\{\mathbf{Z}_1,\ldots,\mathbf{Z}_K\}$, the position-aware cost tensor $\tc$ is computed by the total sum of all pairwise node distances as follows~\cite{alaux2018unsupervised}:
\begin{equation}\label{eq:costtensor}
    \setlength{\abovedisplayskip}{3pt}
    \setlength{\belowdisplayskip}{3pt}
    \tc(v_1,\dots,v_K) = \sum_{1\leq j,k\leq K}\|\mathbf{Z}_j(v_j)-\mathbf{Z}_k(v_k)\|_2.
\end{equation}

\subsection{FGW-based Cluster-level Alignment}\label{sec:cluster}
Hierarchical structures are ubiquitous in real-world networks, and exploring such cluster structures can benefit the multi-network alignment task in both effectiveness and scalability~\cite{zhang2019multilevel,jing2023sterling,liu2019g}. For example, as shown in Figure~\ref{fig:mna}, if cluster-level alignments are known, we can dramatically shrink the solution space by only considering nodes in the aligned clusters for node-level alignments. To obtain high-quality cluster-level alignments, we follow a similar approach as~\cite{xu2019scalable} based on the FGW barycenter~\cite{titouan2019optimal}.

Given $K$ networks $\mathcal{G}_i=\{\mathbf{A}_i,\mathbf{X}_i\}$ and their probability measures $\bm{\mu}_i$, the FGW barycenter $\mathcal{G}_b=\{\mathbf{A}_b,\mathbf{X}_b\}$ serves as a consensus graph that is close to all given graphs in terms of the FGW distance. Regarding each node in $\mathcal{G}_b$ as the barycenter of one cluster, nodes transported to the same barycenter form a cluster-level alignment. Specifically, we adopt the $L_2$ norm between node attributes as the cross-cost matrices, i.e., $\mathbf{C}_{\text{cross}_i}(v,u)=\|\mathbf{X}_i(v)-\mathbf{X}_b(u)\|_2,\forall v\in\G_i, u\in\G_b$, to depict node relationships between $\mathcal{G}_i$ and $\mathcal{G}_b$. The FGW-based cluster-level alignment problem is formulated as
\begin{align}\label{eq:fgwb}
    \setlength{\abovedisplayskip}{3pt}
    \setlength{\belowdisplayskip}{3pt}
    &\mathop{\argmin}_{\mathbf{A}_b,\mathbf{X}_b}\sum_{i=1}^K \text{FGW}_{2,\alpha}(\mathbf{C}_{\text{cross}_i},\mathbf{A}_i,\mathbf{A}_b,\bm{\mu}_i,\bm{\mu}_b).
\end{align}
By exploiting the OT coupling $\mathbf{S}_i$ between $\G_i$ and barycenter $\G_b$, nodes $v_i\in\G_i$ are determinisitically assigned to the barycenter node $b_j\in\G_b$ such that $b_j=\argmax_{b\in\G_b}\mathbf{S}_i(v_i,b)$. Note that these barycenter nodes $b_j$ serves as "references" connecting nodes $v_i$ in cluster $\C_i^j$ in different graph $\G_i$, hence providing a cluster-level alignment $\C^j=\bigcup_{i=1}^K\C_i^j$. An illustrative example is given by the middle subfigure of Figure~\ref{fig:mna}.

\subsection{MFGW-based Node-level Alignment}\label{sec:node}
The MFGW distance in Definition~\ref{def:mfgw} provides a joint distance measure for multiple networks given their attributes and structure, and the optimal coupling $\ts$, as a by-product of the MFGW distance, indicates the high-order node alignments across networks.

To make the computation more tractable, we first propose a tensor form MFGW distance as follows
\begin{restatable}{proposition}{mfgw}
    The MFGW distance in Eq.~\eqref{eq:mfgw1} with $q=2$ can be formulated into a tensor form as:
    \begin{equation}\label{eq:mfgw2}
        \min_{\ts\in\Pi(\bm{\mu}_1,\dots,\bm{\mu}_K)}\langle(1-\alpha)\tc + \alpha\tl,\ts\rangle,
    \end{equation}
    where $\tl(v_1,...,v_K) = (K-1)\sum_{j=1}^K\mathbf{C}_j(v_j,\cdot)^2 \mathcal{P}_j(\ts) - 2\sum_{1\leq j<k\leq K}\mathbf{C}_j(v_j,\cdot)\mathcal{P}_{j,k}(\ts)\mathbf{C}_k(v_k,\cdot)^\T$.
 \end{restatable}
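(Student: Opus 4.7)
The plan is to start from Eq.~\eqref{eq:mfgw1} with $q=2$ and recast the quadratic (Gromov--Wasserstein) portion as a single inner product $\alpha\langle\tl,\ts\rangle$ by expanding the square and marginalizing out the ``primed'' copy of $\ts$. The Wasserstein piece is already a linear functional of $\ts$: after absorbing the exponent $q=2$ into the cross-cost tensor (a harmless notational convention, since $\tc$ appears only inside an entrywise power), it equals $(1-\alpha)\langle\tc,\ts\rangle$, which is exactly the first summand in Eq.~\eqref{eq:mfgw2}.

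The real work is on the structural term. First I would expand $|\mathbf{C}_j(v_j,v_j')-\mathbf{C}_k(v_k,v_k')|^2$ into $\mathbf{C}_j(v_j,v_j')^2+\mathbf{C}_k(v_k,v_k')^2-2\mathbf{C}_j(v_j,v_j')\mathbf{C}_k(v_k,v_k')$ and handle the three summands independently. The key observation is that each summand depends on only one or two of the $2K$ indices, so the inner factor $\ts(v_1',\ldots,v_K')$ can be summed over every primed index that does not appear in the cost, producing the appropriate marginal from Section~2.1. A squared piece like $\mathbf{C}_j(v_j,v_j')^2$ leaves only $v_j'$ alive, and the inner sum collapses to $\mathcal{P}_j(\ts)(v_j')$; contracting against $\mathbf{C}_j(v_j,\cdot)^2$ yields the scalar $\mathbf{C}_j(v_j,\cdot)^2\mathcal{P}_j(\ts)$ that depends only on $v_j$. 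The cross piece leaves $v_j'$ and $v_k'$ alive, so the inner sum collapses to $\mathcal{P}_{j,k}(\ts)$, and the further contraction with the two cost slices becomes $\mathbf{C}_j(v_j,\cdot)\mathcal{P}_{j,k}(\ts)\mathbf{C}_k(v_k,\cdot)^{\T}$.

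At this point, each term associated with a pair $(j,k)$ has been reduced to a scalar that depends on only a few unprimed indices, so the remaining outer weight $\sum_{v_1,\ldots,v_K}\ts(v_1,\ldots,v_K)(\cdot)$ is literally the inner product of $\ts$ with a tensor whose $(v_1,\ldots,v_K)$-entry collects those scalars across all pairs. The bookkeeping step is then a multiplicity count under the convention $1\le j<k\le K$ used in $\tl$: a fixed index $m$ contributes its squared cost from the $K-m$ pairs in which it plays the role of $j$ and from the $m-1$ pairs in which it plays the role of $k$, totalling $K-1$, which is exactly the prefactor of $\sum_j\mathbf{C}_j(v_j,\cdot)^2\mathcal{P}_j(\ts)$ in $\tl$; the cross piece appears once per unordered pair, producing the $-2\sum_{j<k}$ summation. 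Combining this with the Wasserstein contribution gives Eq.~\eqref{eq:mfgw2}.

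I do not expect a deep analytic obstacle; the main difficulty is purely the index bookkeeping --- keeping track of which variables are marginalized at each step and which remain as free tensor indices after the reductions, and pinning down the pair-counting convention so that the coefficients $(K-1)$ and $-2$ come out exactly. A subtlety worth flagging is that $\tl$ itself depends on $\ts$ through its marginals $\mathcal{P}_j(\ts)$ and $\mathcal{P}_{j,k}(\ts)$, so the compact form $\langle(1-\alpha)\tc+\alpha\tl,\ts\rangle$ is a pseudo-linear rewriting of a genuinely quadratic-in-$\ts$ functional rather than an actual linearization; once this is recognized, the derivation is a careful but routine expansion and marginalization.
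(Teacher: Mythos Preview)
Your proposal is correct and follows essentially the same route as the paper: factor the GW double sum as $\sum_{v_1,\ldots,v_K}\ts(v_1,\ldots,v_K)\cdot\tl(v_1,\ldots,v_K)$, expand the square, marginalize the primed copy of $\ts$ to produce $\mathcal{P}_j(\ts)$ and $\mathcal{P}_{j,k}(\ts)$, and count pair multiplicities to obtain the $(K-1)$ coefficient. Your explicit bookkeeping of the multiplicity count and your remark that the resulting form is only pseudo-linear in $\ts$ are both accurate and, if anything, slightly more careful than the paper's own presentation.
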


Directly applying the MFGW on node alignments still leads to intractable time and space complexities. To overcome this issue, we achieve an exponential reduction in both complexities by only considering node alignments inside the aligned clusters $\mathcal{C}^j$, which decomposes the original problem of size $\mathcal{O}(n^K)$ into $M$ independent in-cluster node-level alignment subproblems, each with size $\mathcal{O}\left(\left(\frac{n}{M}\right)^K\right)$. Following a common practice~\cite{titouan2019optimal}, we represent clusters $\mathcal{C}_i^j\in\mathcal{C}^j$ as discrete uniform distributions $\bm{\mu}_i^j=\bm{1}/|\mathcal{C}_i^j|$ supported on its nodes. Together with the position-aware cost tensor $\tc^j$ in Eq.~\eqref{eq:costtensor} and the intra-cluster adjacency matrices $\mathbf{A}^j_1,\dots,\mathbf{A}^j_2$ describing intra-cluster node connectivity, the node-level alignment subproblem is formulated as the following MFGW problem:
\begin{equation}\label{eq:node}
    \setlength{\abovedisplayskip}{1pt}
    \setlength{\belowdisplayskip}{3pt}
    \small
    \ts^j=\mathop{\argmin}_{\ts\in\Pi(\bm{\mu}_1^j,\ldots\bm{\mu}_K^j)}\langle(1-\alpha)\tc^j + \alpha\tl^j,\ts\rangle, \forall j=1,\dots,M,
\end{equation}
where $\ts^j$ is the node-level alignment tensor for $\mathcal{C}^j$.

\section{ALGORITHM AND ANALYSIS}\label{sec:algo}
In this section, we present and analyze our optimization algorithm \name. We first adopt the block coordinate descent (BCD) method to solve the FGW-based cluster-level alignment. Afterward, the MFGW-based node-level alignment is solved by the proximal point method to a local optimum. Relevant analyses of the proposed \name\ are carried out thereafter.

\subsection{Optimization Algorithm}\label{sec:opt algo}
\paragraph{FGW-based cluster-level alignment} in Eq.~\eqref{eq:fgwb} is a non-convex multivariate optimization problem and can be efficiently solved by the BCD algorithm~\cite{ferradans2014regularized}. Specifically, the objective is minimized w.r.t. $\mathbf{S}_i$, $\mathbf{A}_b$ and $\mathbf{X}_b$ iteratively. For the $t$-th iteration, the minimization w.r.t. three variables are calculated as follows.

First, fixing $\mathbf{A}_b$ and $\mathbf{X}_b$, the optimization w.r.t. $\mathbf{S}_i$ is formulated as
\begin{equation}\label{eq:opt_s}
    \setlength{\abovedisplayskip}{1pt}
    \setlength{\belowdisplayskip}{2pt}
    \begin{aligned}
        \mathbf{S}_i^{(t+1)}&=\sum_{j=1}^K \min_{\mathbf{S}_j\in\Pi(\bm{\mu}_j,\bm{\mu}_b)} \!\langle(1-\alpha)\mathbf{C}_{\text{cross}_j}^{(t)}\!+\!\alpha\mathbf{L}^{(t)}_j,\mathbf{S}_j\rangle\\
        &=\min_{\mathbf{S}_i\in\Pi(\bm{\mu}_i,\bm{\mu}_b)} \langle(1-\alpha)\mathbf{C}_{\text{cross}_i}^{(t)}+\alpha\mathbf{L}^{(t)}_i,\mathbf{S}_i\rangle.
    \end{aligned}
\end{equation}
The last equation is due to the fact that $\mathbf{S}_i^{(t)}$ are decoupled from each other, so it is equivalent to minimizing $K$ FGW distances independently. Note that the optimization problem in Eq.~\eqref{eq:opt_s} is a special case (i.e., two-sided OT setting) of the MFGW problem in Definition~\ref{def:mfgw}, and can be efficiently solved by the proximal point method introduced later in this section.

Second, fixing $\mathbf{S}_i$ and $\mathbf{X}_b$, the optimal value for the adjacency matrix $\mathbf{A}_b$ of $\G_b$ can be computed by the first-order optimality condition as~\cite{peyre2016gromov}
\begin{equation}\label{eq:opt_a}
    \setlength{\abovedisplayskip}{1pt}
    \setlength{\belowdisplayskip}{3pt}
    \mathbf{A}_b^{(t+1)}=\frac{\mathbf{1}_{M\times M}}{\bm{\mu}_b\bm{\mu}_b^{\T}}\sum_{i=1}^{K}\left(\mathbf{S}_i^{(t+1)^{\T}}\mathbf{A}_i\mathbf{S}_i^{(t+1)}\right).
\end{equation}

Third, fixing $\mathbf{S}_i$ and $\mathbf{A}_b$, the objective function is quadratic w.r.t. the node attribute matrix $\mathbf{X}_b$, whose optimal value can be efficiently computed as~\cite{cuturi2014fast}
\begin{equation}\label{eq:opt_x}        
    \mathbf{X}^{(t+1)}_b=\sum_{i=1}^{K}\left(\text{diag}\left(\frac{\mathbf{1}_{M}}{\bm{\mu}_b}\right)\mathbf{S}_i^{(t+1)^{\T}}\mathbf{X}_i\right).
\end{equation}

By iteratively applying Eqs.~\eqref{eq:opt_s}-\eqref{eq:opt_x}, the algorithm converges to the local optimal barycenter~\cite{titouan2019optimal}. 

\paragraph{MFGW-based node-level alignment.} 
In order to handle the non-convex objective function in Eq.~\eqref{eq:node}, we generalize the proximal point method~\cite{xu2019gromov} to the multi-marginal setting with guaranteed convergence to a local optimum. The key idea is to decompose the non-convex problem into a series of convex subproblems regularized by the proximal operator. We adopt the KL divergence as the proximal operator, i.e., $\text{KL}(\ts\|\ts^{(t)})$, to regularize the distance between two successive solutions, and the resulting problem corresponds to a regularized MOT problem as follows:
\begin{equation}\label{eq:proximal}
    \begin{aligned}
        &\ts^{(t+1)}=\!\!\!\!\!\mathop{\arg\min}\limits_{\ts\in\Pi(\bm{\mu}_1^j,\ldots,\bm{\mu}_K^j)}\!\!\!\!\!\langle(1-\alpha)\tc \!+\! \alpha\tl^{(t)},\ts\rangle \!+\!\lambda \text{KL}(\ts\|\ts^{(t)})\\
        &=\!\!\!\mathop{\arg\min}\limits_{\ts\in\Pi(\bm{\mu}_1^j,\ldots,\bm{\mu}_K^j)}\!\!\!\langle\bm{\mathcal{Q}}^{(t)},\ts\rangle +\lambda \langle\ts,\log\ts\rangle,
    \end{aligned}
\end{equation}
where $\bm{\mathcal{Q}}^{(t)}\!=\!(1-\alpha)\tc + \alpha\tl^{(t)}\!
-\!\lambda\log\ts^{(t)}$ is fixed when optimizing $\ts$; hence, the resulting problem corresponds to an entropic regularized OT problem with modified cost tensor $\bm{\mathcal{Q}}^{(t)}$ and can be efficiently solved by the Sinkhorn algorithm~\cite{cuturi2013sinkhorn}.

Specifically, with initial scaling vectors $\mathbf{u}_i^{(0)}$, the algorithm iteratively updates scaling vectors by
\begin{equation}\label{eq:sinkhorn1}
    \setlength{\abovedisplayskip}{2pt}
    \setlength{\belowdisplayskip}{3pt}
    \mathbf{u}_i^{(l+1)} =  \frac{\mathbf{u}_i^{(l)}\odot\bm{\mu}_i^j}{\mathcal{P}_i\left[\text{exp}(-\frac{\bm{\mathcal{Q}}^{(t)}}{\lambda})\odot\bigotimes_{i=1}^K \mathbf{u}_i^{(l)}\right]}.
\end{equation}
After $L$ inner iterations of Eq.~\eqref{eq:sinkhorn1}, the final solution $\ts^{(t+1)}$ can be computed as
\begin{equation}\label{eq:sinkhorn2}
    \setlength{\abovedisplayskip}{2pt}
    \setlength{\belowdisplayskip}{2pt}
    \ts^{(t+1)}=\text{exp}(-\frac{\bm{\mathcal{Q}}^{(t)}}{\lambda})\odot\bigotimes_{i=1}^K \mathbf{u}_i^{(L)}.
\end{equation}
As we will show in Section~\ref{sec:ana}, by iteratively applying Eqs.~\eqref{eq:proximal}-\eqref{eq:sinkhorn2}, the solution sequence given by the proposed proximal point method converges to a local optimum of the MFGW distance. 

\subsection{Theoretical Analysis}\label{sec:ana}
Without loss of generality, we assume that networks share a comparable size, each with $\mathcal{O}(n)$ nodes and $\mathcal{O}(m)$ edges. For brevity, we denote the average cluster size as $\overline{n}=\frac{n}{M}$. 

\paragraph{Complexity analysis.} We first provide a complexity analysis of the proposed \name\ as follows
\begin{restatable}{proposition}{complexity}
    With $K$ graphs, $M$ clusters, and $T$ proximal point iterations, the space complexity of \name\ is $\mathcal{O}(M\overline{n}^K)$, and the time complexity is $\mathcal{O}(TKM(n^2+K\overline{n}^K))$.
\end{restatable}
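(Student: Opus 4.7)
The plan is to bound the space and time separately, and within the time analysis to treat the two stages of \name\ (the FGW-barycenter-based clustering solved by BCD in Eqs.~\eqref{eq:opt_s}--\eqref{eq:opt_x}, and the $M$ node-level MFGW subproblems solved by the proximal point/Sinkhorn loop in Eqs.~\eqref{eq:proximal}--\eqref{eq:sinkhorn2}) one after the other, identifying in each case the dominant per-iteration matrix or tensor operation and multiplying through by the loop counts.

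For the space bound, the largest objects stored are the per-cluster tensors in the node-level stage: for each cluster $j$, the coupling tensor $\ts^j\in\mathbb{R}^{\overline{n}\times\cdots\times\overline{n}}$, the fixed cost tensor $\tc^j$, and, during a proximal step, the intermediates $\tl^{(t)}$ and $\bm{\mathcal{Q}}^{(t)}$ from Eq.~\eqref{eq:proximal}, each of size $\overline{n}^K$; summing over $M$ clusters yields $\mathcal{O}(M\overline{n}^K)$. The auxiliary storage---the RWR embeddings $\mathbf{R}_i$, the $K$ barycenter couplings $\mathbf{S}_i$, and the $KM$ intra-cluster adjacency matrices---costs $\mathcal{O}(Kn|\mathcal{L}|+KnM+KM\overline{n}^2)$, which is dominated by $M\overline{n}^K$ for $K\ge 2$, giving the stated bound.

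For the time bound, the cluster-level BCD stage has per-iteration dominant cost coming from the product $\mathbf{S}_i^{\T}\mathbf{A}_i\mathbf{S}_i$ in Eq.~\eqref{eq:opt_a}, which is $\mathcal{O}(n^2M)$ per graph (since $\mathbf{A}_i\in\mathbb{R}^{n\times n}$ and $\mathbf{S}_i\in\mathbb{R}^{n\times M}$), giving $\mathcal{O}(TKMn^2)$ across $K$ graphs and $T$ iterations, with the Sinkhorn updates of the two-sided subproblems and the $\mathbf{X}_b$ update of strictly lower order. For the node-level stage, each of the $M$ subproblems requires, per proximal iteration: (i) evaluating $\tl^{(t)}$ via Proposition~1, which, after precomputing the $K$ marginals $\mathcal{P}_k(\ts)$ and the $\binom{K}{2}$ bivariate marginals $\mathcal{P}_{j,k}(\ts)$ (each $\mathcal{O}(\overline{n}^K)$) together with the auxiliary matrices $\mathbf{C}_j\mathcal{P}_{j,k}(\ts)\mathbf{C}_k^{\T}$, assembles $\tl^{(t)}$ in $\mathcal{O}(K^2\overline{n}^K)$; and (ii) running a constant number of Sinkhorn inner steps of Eq.~\eqref{eq:sinkhorn1}, each requiring $K$ marginalizations of a $K$-tensor of size $\overline{n}^K$ and hence costing $\mathcal{O}(K\overline{n}^K)$. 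Summing over $M$ clusters and $T$ outer iterations gives $\mathcal{O}(TMK^2\overline{n}^K)$, and combining with the barycenter contribution yields $\mathcal{O}(TKM(n^2+K\overline{n}^K))$ as claimed.

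The main obstacle is the careful bookkeeping of the multi-marginal tensor operations: I must show that the marginal sums $\mathcal{P}_j(\ts),\mathcal{P}_{j,k}(\ts)$ and the Hadamard/outer-product updates in Eq.~\eqref{eq:sinkhorn1} can be carried out on-the-fly without ever materializing an object larger than $\ts$ itself, so that the per-cluster per-iteration work stays at $\mathcal{O}(K^2\overline{n}^K)$ rather than blowing up by an extra factor of $\overline{n}$ from a naive evaluation of the outer product $\bigotimes_{i=1}^K\mathbf{u}_i^{(l)}$. Once this is established, the rest of the argument is a routine multiplication of the loop counts $T$, $K$, and $M$ against the two dominant operations ($\mathbf{S}_i^{\T}\mathbf{A}_i\mathbf{S}_i$ and the Sinkhorn tensor marginalization) identified above.
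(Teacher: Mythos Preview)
Your proposal is correct and follows essentially the same route as the paper: bound space by the $M$ per-cluster $K$-tensors, and bound time by splitting into the cluster-level BCD stage (dominated by the $\mathbf{S}_i^{\T}\mathbf{A}_i\mathbf{S}_i$ product in Eq.~\eqref{eq:opt_a}, giving $\mathcal{O}(TKMn^2)$) and the $M$ node-level MFGW subproblems (giving $\mathcal{O}(TMK^2\overline{n}^K)$). Your explicit accounting of the marginalizations $\mathcal{P}_j(\ts),\mathcal{P}_{j,k}(\ts)$ and the in-place Sinkhorn updates is more detailed than the paper's own proof, which simply asserts the $\mathcal{O}(TMK^2\overline{n}^K)$ node-level cost; that extra care is useful and does not change the argument.
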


For space complexity, the overall $\mathcal{O}(M\overline{n}^K)$ achieves an exponential reduction of space in terms of the number of graphs $K$ compared to $\mathcal{O}(n^K)$ given by the straightforward method, which finds the full node-level alignment tensor without cluster-level alignment. For time complexity, the first term $\mathcal{O}(TKMn^2)$ corresponds to the cluster-level alignment, and the second term $\mathcal{O}(TK^2M\overline{n}^K)$ accounts for the node-level alignment. For cases where $\mathcal{O}(\overline{n})<\mathcal{O}\left(\sqrt[K]{n^2/K}\right)$, the time complexity is determined by the cluster-level alignment and can be approximated by $\mathcal{O}(TKMn^2)$, which is quadratic w.r.t. $n$ and linear w.r.t. $K$. Otherwise, the time complexity mostly lies in the node-level alignment and can be approximated by $\mathcal{O}(TK^2M\overline{n}^K)$, which is polynomial w.r.t. $\overline{n}$ and exponential w.r.t. $K$. Since that $\overline{n}=\frac{n}{M}$, we achieve an exponential reduction of time in terms of the number of graphs $K$ compared to $\mathcal{O}(TK^2n^K)$ of the straightforward method.

\paragraph{Optimality and convergence.} First, for the position-aware cost tensor in Eqs.~\eqref{eq:rwr} and \eqref{eq:costtensor}, the computation has guaranteed convergence via the fixed point method as the eigenvalues of $\mathbf{W}_i$ lie in [-1,1]. Second, for the FGW barycenter computation, the solution sequence given by the BCD method converges to a stationary point~\cite{titouan2019optimal}. Third, for the MFGW computation, we have the following proposition stating that the proximal point method converges to a local optimum of the MFGW problem.
\begin{restatable}{proposition}{converge}\label{prop:converge}
    The solution sequence $\ts^{(t)}$ given by the proximal point method converges to a stationary point of the MFGW problem in Definition~\ref{def:mfgw}.
\end{restatable}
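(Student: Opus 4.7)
The plan is to generalize the proximal point convergence argument of \cite{xu2019gromov} from the two-marginal Gromov--Wasserstein setting to the $K$-marginal case. First I would write the MFGW objective as $F(\ts)=\langle(1-\alpha)\tc,\ts\rangle+\alpha Q(\ts)$, where $Q(\ts)=\sum_{j,k,v,v'}|\mathbf{C}_j(v_j,v_j')-\mathbf{C}_k(v_k,v_k')|^2\,\ts(v)\ts(v')$ is a homogeneous quadratic form in $\ts$; by Proposition~1, $\tl^{(t)}:=\tl(\ts^{(t)})$ is proportional to $\nabla Q(\ts^{(t)})$, so Eq.~(9) is exactly a KL Bregman proximal linearization of $F$ around $\ts^{(t)}$. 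Existence and uniqueness of $\ts^{(t+1)}$ then follow from strict convexity of the entropic regularizer on the nonempty compact polytope $\Pi(\bm{\mu}_1^j,\ldots,\bm{\mu}_K^j)$, and the Sinkhorn iteration in Eqs.~(10)--(11) realizes this unique minimizer.

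The core step is a sufficient-decrease lemma. From first-order optimality of $\ts^{(t+1)}$ in the convex subproblem,
\begin{equation*}
\langle(1-\alpha)\tc+\alpha\tl^{(t)},\,\ts^{(t+1)}-\ts^{(t)}\rangle+\lambda\,\text{KL}(\ts^{(t+1)}\|\ts^{(t)})\le 0.
\end{equation*}
Because $Q$ is quadratic, I would use the exact expansion $F(\ts^{(t+1)})-F(\ts^{(t)})=\langle(1-\alpha)\tc+\alpha\nabla Q(\ts^{(t)}),\,\ts^{(t+1)}-\ts^{(t)}\rangle+\alpha R(\ts^{(t+1)}-\ts^{(t)})$, where $R(\cdot)$ is the symmetric bilinear remainder induced by $Q$. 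Substituting the optimality inequality above, bounding $|R(\Delta)|\le M_R\|\Delta\|_1^2$ via boundedness of the cost matrices $\mathbf{C}_j$ on the compact polytope, and invoking Pinsker's inequality $\text{KL}(p\|q)\ge \tfrac{1}{2}\|p-q\|_1^2$ yields $F(\ts^{(t+1)})\le F(\ts^{(t)})-c\,\|\ts^{(t+1)}-\ts^{(t)}\|_1^2$ for a positive constant $c$, provided $\lambda$ exceeds the curvature constant $2\alpha M_R$.

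Since $F$ is continuous on the compact feasible polytope, it is bounded below; telescoping the decrease gives $\sum_t\|\ts^{(t+1)}-\ts^{(t)}\|_1^2<\infty$, so the successive differences vanish. By Bolzano--Weierstrass, a subsequence $\ts^{(t_k)}\to\ts^\star\in\Pi(\bm{\mu}_1^j,\ldots,\bm{\mu}_K^j)$. Passing to the limit in the KKT conditions of Eq.~(9)---using continuity of $\ts\mapsto\tl(\ts)$ together with $\text{KL}(\ts^{(t_k+1)}\|\ts^{(t_k)})\to 0$---shows that $\ts^\star$ satisfies the first-order optimality conditions of the MFGW problem in Definition~2, i.e., $\ts^\star$ is a stationary point.

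The main obstacle will be verifying the quadratic identity and the descent bound in the multi-marginal case: the combinatorial structure of $\tl$ (the $(K-1)$-weighted diagonal terms together with the pairwise marginals $\mathcal{P}_{j,k}(\ts)$) must be matched to $\tfrac{1}{2}\nabla Q$ by a careful index-tracking argument, and $M_R$ must be controlled explicitly in terms of $K$ and $\max_{j,k}\|\mathbf{C}_j\|_\infty$ so that the condition on $\lambda$ is effectively realizable. Sinkhorn convergence of the inner loop and the passage to the limit in the KKT conditions are otherwise routine.
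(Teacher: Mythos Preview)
Your route differs substantially from the paper's. The paper does not carry out a direct Bregman-descent/Pinsker argument: it casts Eq.~\eqref{eq:proximal} as an instance of the successive upper-bound minimization (SUM) scheme of \cite{razaviyayn2013unified}, takes the surrogate $u(\ts,\ts')=\langle(1-\alpha)\tc+\alpha\tl(\ts'),\ts\rangle+\lambda\,\text{KL}(\ts\|\ts')$, verifies the four conditions C1--C4 of that framework (tightness $u(\ts,\ts)=f(\ts)$, majorization $u\ge f$ via nonnegativity of KL, matching directional derivatives, joint continuity), and then invokes the black-box convergence theorem. No curvature condition on $\lambda$ enters. Your approach, if it went through, would be more quantitative---an explicit descent constant and in principle a rate---at the price of a step-size restriction.

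There is, however, a genuine gap in your sufficient-decrease step. By Proposition~1 the map $\ts\mapsto\tl(\ts)$ is linear, say $\tl(\ts)=M\ts$ with $M$ symmetric, so the GW part of the objective is $\alpha\langle M\ts,\ts\rangle$ and its gradient is $2\alpha M\ts=2\alpha\tl(\ts)$. Hence the linear cost in Eq.~\eqref{eq:proximal} is $(1-\alpha)\tc+\alpha\tl^{(t)}=(1-\alpha)\tc+\tfrac{\alpha}{2}\nabla Q(\ts^{(t)})$, which is \emph{not} $\nabla F(\ts^{(t)})=(1-\alpha)\tc+\alpha\nabla Q(\ts^{(t)})$; proportionality of $\tl^{(t)}$ to $\nabla Q(\ts^{(t)})$ does not carry over to the full cost because the Wasserstein term $(1-\alpha)\tc$ is unscaled. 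Consequently Eq.~\eqref{eq:proximal} is not a Bregman proximal linearization of $F$. When you substitute your optimality inequality into the exact quadratic expansion of $F$, a first-order residual $\alpha\langle\tl^{(t)},\ts^{(t+1)}-\ts^{(t)}\rangle$ remains; since this term is $O(\|\Delta\|_1)$ while both the Pinsker lower bound on $\text{KL}$ and the curvature bound on $R$ are $O(\|\Delta\|_1^2)$, the inequality $F(\ts^{(t+1)})\le F(\ts^{(t)})-c\|\Delta\|_1^2$ does not follow for any $c>0$, no matter how large $\lambda$ is. To close the argument you would have to either redefine the subproblem with $2\alpha\tl^{(t)}$ (so that it is a genuine proximal-gradient step of $F$) or drop the gradient viewpoint and obtain descent from the majorization $u(\cdot,\ts^{(t)})\ge f(\cdot)$ directly---which is exactly the SUM route the paper takes.
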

The general idea is to show the regularized objective is an upper bound of the original objective and further take advantage of the convergence theorem of the successive upper-bound minimization method~\cite{razaviyayn2013unified,xu2019gromov}. Therefore, the proposed \name\ is guaranteed to converge to the local optimum.

\paragraph{Connection with pairwise FGW distance.} Besides, we reveal the close connection between FGW and MFGW distance. When adopting the square loss, i.e., $q=2$, the MFGW distance is lower bounded by the sum of FGW distances between all possible pairs. In other words, the joint distance between multiple networks is likely to be underestimated by the pairwise FGW distance.
\begin{restatable}{theorem}{bound}
    Given $K$ graphs $\G_1,\G_2,\dots,\G_K$, the MFGW distance is lower bounded by the sum of all pairwise FGW distances, that is:
    \begin{equation*}
        \sum_{1\leq j<k\leq K}\textup{FGW}_{2,\alpha}(\G_j,\G_k)\leq \textup{MFGW}_{2,\alpha}(\G_1,\G_2,\dots,\G_K).
    \end{equation*}
\end{restatable}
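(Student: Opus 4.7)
The plan is to take the optimal multi-marginal coupling $\ts^{\star}$ of the MFGW problem, form its pairwise marginals $\mathbf{S}_{jk}=\mathcal{P}_{j,k}(\ts^{\star})$, and argue that (i) each $\mathbf{S}_{jk}$ is feasible for $\textup{FGW}_{2,\alpha}(\G_j,\G_k)$ because it lies in $\Pi(\bm{\mu}_j,\bm{\mu}_k)$ by construction, and (ii) the MFGW objective evaluated at $\ts^{\star}$ dominates the sum over $j<k$ of the FGW objectives evaluated at $\mathbf{S}_{jk}$. Since each such FGW objective is in turn lower bounded by the minimum $\textup{FGW}_{2,\alpha}(\G_j,\G_k)$, chaining the inequalities yields the claim.

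First I would handle the Gromov--Wasserstein (structural) part. The integrand $|\mathbf{C}_j(v_j,v_j')-\mathbf{C}_k(v_k,v_k')|^2$ depends only on the four indices $(v_j,v_j',v_k,v_k')$, so summing out the remaining coordinates of the product $\ts^{\star}(v_1,\dots,v_K)\ts^{\star}(v_1',\dots,v_K')$ replaces it by $\mathbf{S}_{jk}(v_j,v_k)\mathbf{S}_{jk}(v_j',v_k')$. Consequently, each $(j,k)$ contribution to the MFGW structural sum equals exactly the structural portion of $\textup{FGW}_{2,\alpha}(\G_j,\G_k)$ evaluated at the feasible coupling $\mathbf{S}_{jk}$.

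Next I would address the Wasserstein part. Using the cross-cost tensor in Eq.~\eqref{eq:costtensor}, $\tc(v_1,\dots,v_K)=2\sum_{j<k}\|\mathbf{Z}_j(v_j)-\mathbf{Z}_k(v_k)\|_2$, so the elementary inequality $(\sum_i a_i)^2\geq \sum_i a_i^2$ for nonnegative $a_i$ bounds $\tc(v_1,\dots,v_K)^2$ below by the sum of pairwise squared cross-costs $\|\mathbf{Z}_j(v_j)-\mathbf{Z}_k(v_k)\|_2^2$ (with plenty of slack from the factor of $4$). Marginalizing out the non-$(j,k)$ coordinates of $\ts^{\star}$ then reproduces the Wasserstein portion of $\textup{FGW}_{2,\alpha}(\G_j,\G_k)$ at $\mathbf{S}_{jk}$ for each pair.

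Adding the two pairwise bounds, weighted by $(1-\alpha)$ and $\alpha$ respectively, reconstitutes a full FGW objective per pair and gives $\textup{MFGW}_{2,\alpha}(\G_1,\dots,\G_K)\geq \sum_{j<k}\textup{FGW}_{2,\alpha}(\G_j,\G_k)$, using at the last step that $\mathbf{S}_{jk}$ is feasible but not necessarily optimal. The main obstacle is the combinatorial bookkeeping: the MFGW structural sum ranges over $1\leq j,k\leq K$ (double-counting pairs) while the right-hand side uses unordered pairs, and the expansion of $\tc^2$ generates extra cross-terms $\|\mathbf{Z}_j-\mathbf{Z}_k\|_2\|\mathbf{Z}_{j'}-\mathbf{Z}_{k'}\|_2$ that must be discarded with the correct (nonnegative) sign. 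Once one verifies that after these reductions every pairwise FGW appears with coefficient at least one, the rest is the marginalization computation described above.
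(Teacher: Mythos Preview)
Your approach is essentially the paper's: take the optimal multi-marginal coupling $\ts^{\star}$, form its two-dimensional marginals $\mathbf{S}_{jk}=\mathcal{P}_{j,k}(\ts^{\star})\in\Pi(\bm{\mu}_j,\bm{\mu}_k)$, rewrite the MFGW objective at $\ts^{\star}$ as a sum over pairs $(j,k)$ of FGW-type objectives evaluated at the marginals $\mathbf{S}_{jk}$, and then use feasibility of each $\mathbf{S}_{jk}$ to bound each summand below by the true $\textup{FGW}_{2,\alpha}(\G_j,\G_k)$. The Gromov--Wasserstein part is handled identically in both proofs, by exact marginalization. The only difference is in the Wasserstein term: the paper writes the cross-cost directly as $\sum_{j<k}\|\mathbf{Z}_j(v_j)-\mathbf{Z}_k(v_k)\|_2$ and marginalizes linearly to obtain an \emph{equality} between the MFGW objective and $\sum_{j<k}\textup{FGW}_{2,\alpha}(\G_j,\G_k;\mathbf{S}_{jk})$, applying the inequality only at the final feasibility step; you instead square $\tc$ in accordance with $q=2$ and invoke $(\sum_i a_i)^2\geq\sum_i a_i^2$ to extract the pairwise pieces. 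Your treatment is more literal about the exponent on $\tc$ but introduces slack the paper avoids; either way the argument goes through, and the ordered-versus-unordered-pair bookkeeping you flag resolves to harmless constant factors.
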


\section{EXPERIMENTS}\label{sec:exp}
We evaluate the proposed \name\ from the following aspects:
\begin{itemize}[itemsep=1pt,topsep=0pt,parsep=0pt]
    \item Q1. How effective is \name\ (Section~\ref{sec:exp-eff})?
    \item Q2. How scalable is \name\ (Section~\ref{sec:exp-sca})?
    \item Q3. How is the convergence of \name\ (Section~\ref{sec:exp-conv})?
    \item Q3. How robust is \name\ to hyperparameters (Section~\ref{sec:exp-hyper})?
\end{itemize}

\paragraph{Datasets.} Our method is evaluated on both plain networks, including Douban, ER and DBLP, and attributed networks, including ACM(A) and DBLP(A). 
To mitigate the effect of data split, we randomly split the datatsets into 10 folds, using 1 fold (i.e., 10\%) for training and the rest 9 folds for testing. We report the mean and standard deviation of the alignment results with different training/test splits\footnote{Code and datasets are available at \url{https://github.com/zhichenz98/HOT-AAAI24}}.

\paragraph{Baseline methods.} The proposed \name\ is compared with a variety of baseline methods, including (1) consistency-based methods: IsoRank~\cite{singh2008global}, FINAL~\cite{zhang2016final}, MOANA~\cite{zhang2019multilevel}, and SYTE~\cite{du2021sylvester}, (2) embedding-based methods: CrossMNA~\cite{chu2019cross}, NetTrans~\cite{zhang2020nettrans}, NeXtAlign~\cite{zhang2021balancing}, and Grad-Align~\cite{park2022grad}, and (3) OT-based methods: GW~\cite{memoli2011gromov}, FGW~\cite{titouan2019optimal}, Low-rank OT (LOT)~\cite{scetbon2021low}, S-GWL~\cite{xu2019scalable}, and WAlign~\cite{gao2021unsupervised}. For pairwise alignment methods, we run them on each pair of networks and integrate the alignment matrices by multiplication (e.g., for networks $\mathcal{G}_1,\mathcal{G}_2,\mathcal{G}_3$, the alignment tensor is obtained by $\ts(x,y,z)=\mathbf{S}_{\mathcal{G}_1,\mathcal{G}_2}(x,y)\mathbf{S}_{\mathcal{G}_1,\mathcal{G}_3}(x,z)\mathbf{S}_{\mathcal{G}_2,\mathcal{G}_3}(y,z)$). 

\paragraph{Parameter settings.} In our experiments, we adopt a consistent parameter setting with $\lambda\!=\!10^{-3}$, $\alpha\!=\!0.5$, and $\beta\!=\!0.15$. For number of clusters, we set $M=\lceil\frac{n}{50}\rceil$ for all datasets.

\paragraph{Metrics.} We evaluate the effectiveness in terms of pairwise Hits@{\em K} (PH@{\em K}), high-order Hits@{\em K} (HH@{\em K})~\cite{du2021sylvester} and Mean Reciprocal Rate (MRR). Given a test node $x_1\in\mathcal{G}_1$, if any corresponded node $x_i\in\mathcal{G}_i$ exists in the top-{\em K} most similar node sets, it is regarded as a pairwise hit. Only if the whole corresponded node set $\{x_i\}$ appears in the top-{\em K} most similar node sets, it is regarded as a high-order hit. For a test dataset with $n$ node sets, the Hits@{\em K} is computed by $\text{Hits@{\em K}}=\frac{\# \text{ of hits}}{n}$, and MRR is computed by the average of the inverse of high-order alignment ranking $\text{MRR}=\frac{1}{n}\sum_{i=1}^n\frac{1}{\text{rank}(\{x_i\})}$

\subsection{Effectiveness Results}\label{sec:exp-eff}
The alignment results on plain and attributed networks are shown in Figures~\ref{fig:plain-1} and \ref{fig:attributed-1} respectively.
For better visualization, we use "$\times$" for consistency-based, "$\bullet$" for embedding-based and "$\star$" for OT-based methods.

\begin{figure*}[h]
    \centering
    \includegraphics[width = \textwidth,trim = 5 10 10 1,clip]{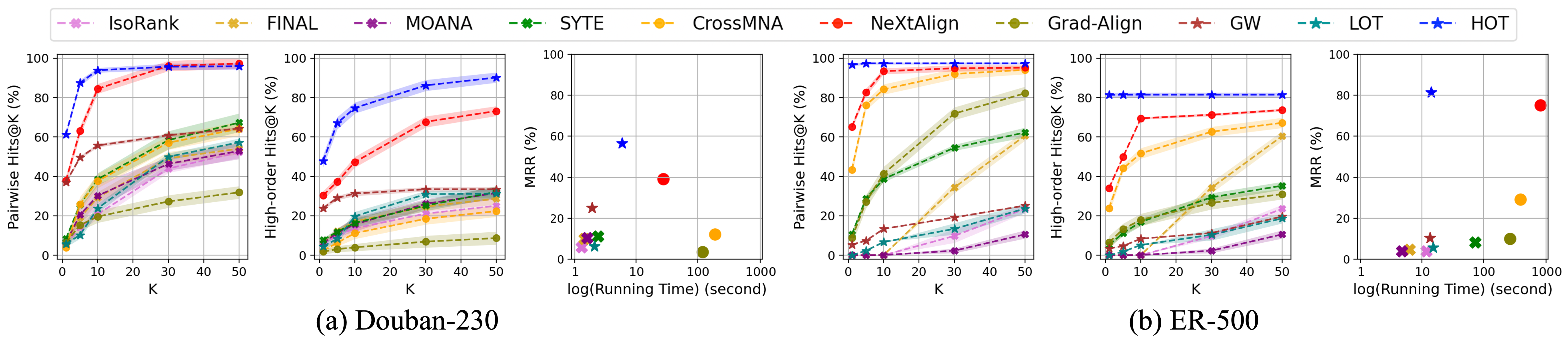}
    \caption{Alignment results on plain networks: (a) Douban-230; (b) ER-500.}\label{fig:plain-1}
\end{figure*}

\begin{figure*}[t]
    \centering
    \includegraphics[width = \textwidth,trim = 5 10 10 1,clip]{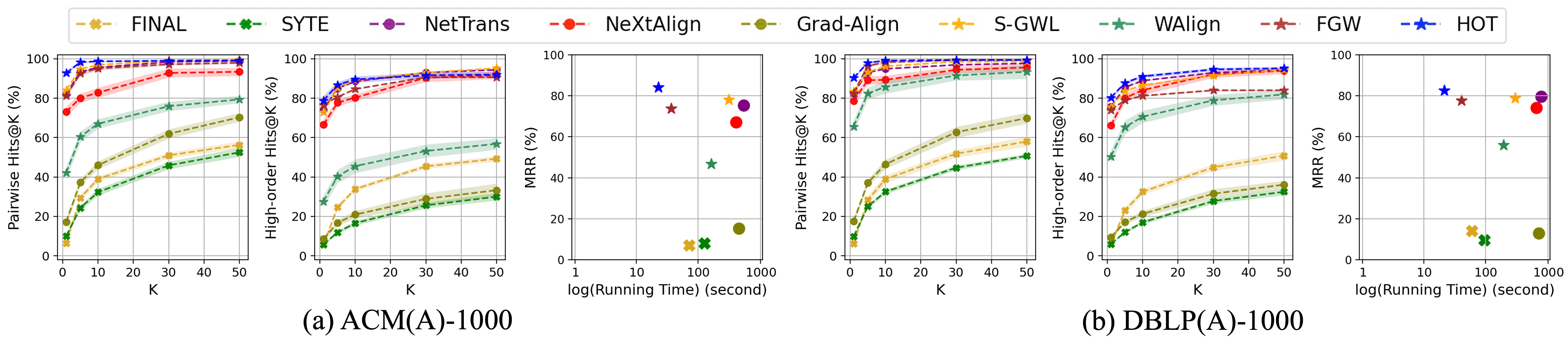}
    \caption{Alignment results on attributed networks: (a) ACM(A)-1000; (b) DBLP(A)-1000.}\label{fig:attributed-1}
\end{figure*}

It is shown that \name\ outperforms all baselines in most cases with more significant outperformance on
\begin{itemize}
    \item plain networks than attributed networks, thanks to the position-aware cost tensor and the MFGW distance addressing the topological relationship across multiple networks jointly, which is particularly important to plain networks where node attributes are unavailable.
    \item high-order metrics than pairwise metrics, as the proposed \name\ aligns networks jointly, whereas many baselines align networks in pairs, resulting in incompatible alignment scores and node embeddings in disparate spaces.
    \item harder metrics (e.g., Hits@1) than softer metrics (e.g., Hits@50). This is due to the exponential term in Eq.~\eqref{eq:opt_s} that provides more deterministic and noise-reduced alignments~\cite{mena2018learning, zeng2023parrot}, compared with the uncertain alignments given by many baselines.
\end{itemize}

Compared with consistency-based methods, \name\ outperforms the best competitor by at least 55.3\% in PH@10, 51.9\% in HH@10, and 42.9\% in MRR on plain networks. On attributed networks, \name\ surpasses  the best competitor\footnote{Slight deviations from results in~\cite{du2021sylvester} may exist due to different ways to categorize node attributes.} at least 57.3\% in PH@10, 53.3\% in HH@10, and 68.2\% in MRR. The limited performance of consistency-based methods owes to the fact that the consistency principle only enforces the consistency between aligned node pairs~\cite{zhang2016final} and fail to model the high-order node relationships jointly. Besides, \name\ achieves comparable running time as consistency-based methods when aligning small networks (e.g., Douban-230 and ER-500) and faster speed when aligning large networks (e.g., ACM(A)-1000 and DBLP(A)-1000).

In comparison to embedding-based methods, \name\ outperforms the best competitor by at least 4.0\% in PH@10, 12.0\% in HH@10, and 1.2\% in MRR on plain networks. On attributed networks, \name\ surpasses the best competitor by at least 3.1\% in PH@10, 1.0\% in HH@10, and 2.8\% in MRR. While the best competitor \textsc{NeXtAlign}~\cite{zhang2021balancing} achieves comparable PH@{\em K}, \name\ significantly outperforms it in HH@{\em K} and MRR. This is because embedding-based methods basically optimize a ranking-based loss addressing pairwise distances, while the joint high-order relationships are largely ignored.

\begin{figure}[t]
\centering
\subfigure[]{\includegraphics[width = 0.48\linewidth,trim = 12 1 15 1,clip]{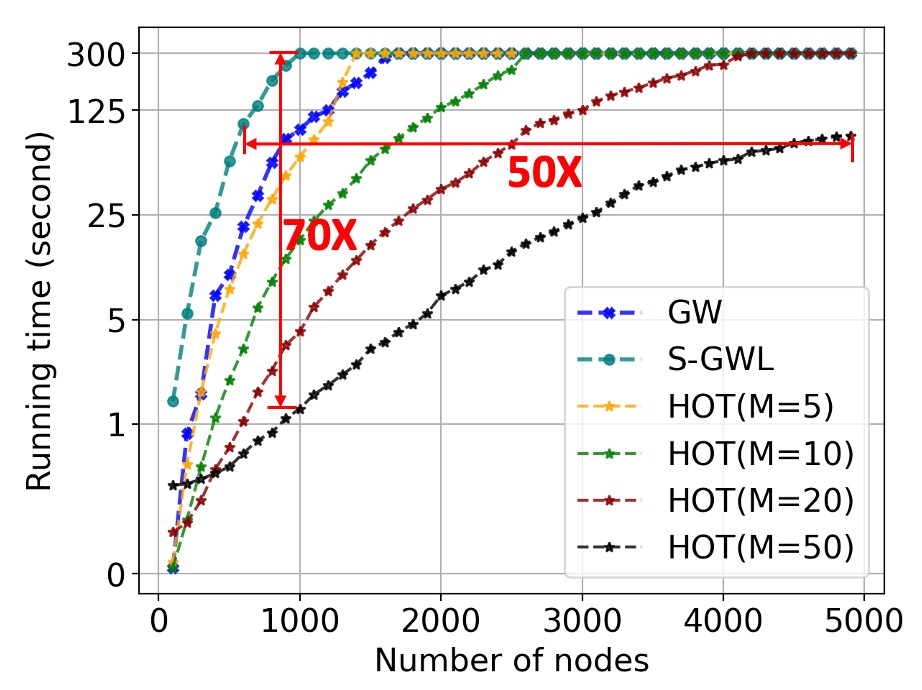}}
\subfigure[]{\includegraphics[width = 0.48\linewidth,trim = 12 1 15 1,clip]{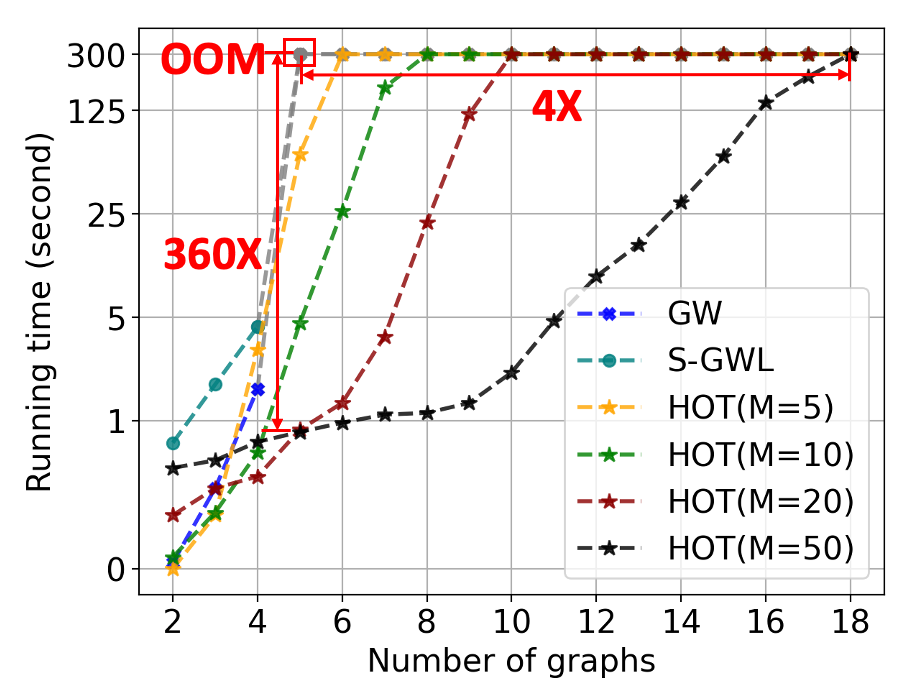}}
\caption{Experiments on time complexity w.r.t. (a) number of nodes, and (b) number of graphs: grey points indicate out-of-memory. Note that the running time is in the log scale.}\label{fig:time}
\end{figure}
For OT-based methods, empirical evaluation shows that \name\ outperforms the best competitor by at least 35.4\% in PH@10, 15.8\% in HH@10, and 6.3\% in MRR. On attributed networks, \name\ outperforms the best OT-based competitor by at least 0.2\% in PH@10, 0.4\% in HH@10, and 1.4\% in MRR. Although LOT adopts a similar idea as \name\ that explores the low rank structure in the pairwise alignment matrix, i.e., cluster structure in graphs, the low rank structure is inconsistent for different network pairs, hence may fail to be generalized to the multi-network setting.

\subsection{Scalability Results}\label{sec:exp-sca}
We study the scalability of the proposed \name. In general, we evaluate the scalability w.r.t. the number of nodes $n$ by aligning three ER graphs with different sizes, and the scalability w.r.t. the number of graphs $K$ by aligning multiple ER graphs with 100 nodes. The time complexity and space complexity results are shown in Figure~\ref{fig:time} and Figure~\ref{fig:space} respectively. We terminate the program if it can not finish in 300 seconds or exceeds the memory capacity (8GB).

\begin{figure}[t]
\centering
\subfigure[]{\includegraphics[width = 0.48\linewidth,trim = 2 2 12 0,clip]{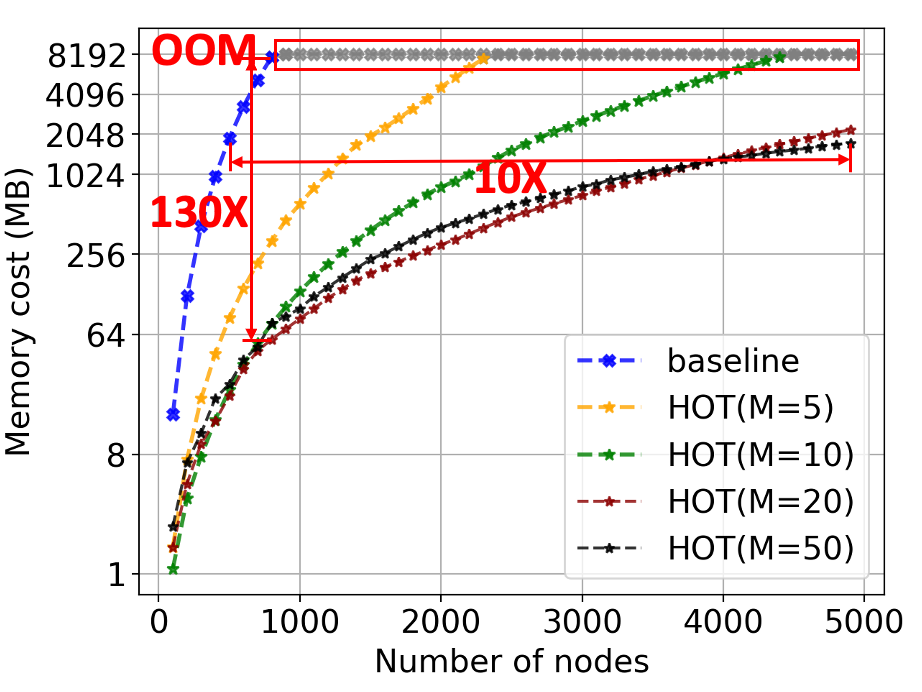}}
\subfigure[]{\includegraphics[width = 0.48\linewidth,trim = 2 2 15 0,clip]{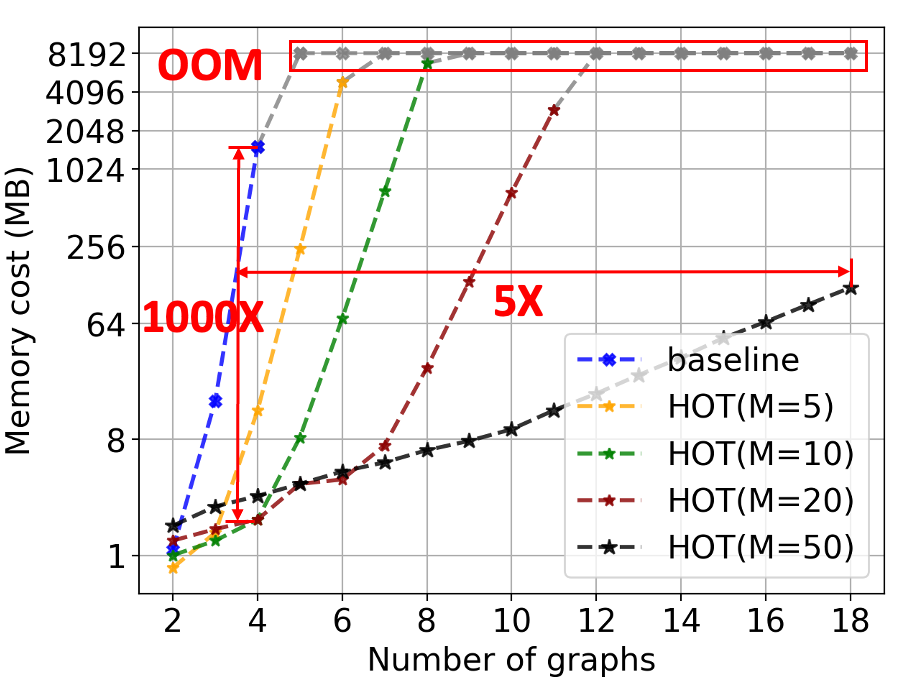}}
\caption{Experiments on space complexity w.r.t. (a) number of nodes, and (b) number of graphs: grey points indicate out-of-memory. Note that the memory cost is in the log scale.}\label{fig:space}
\end{figure}
For time complexity, when the number of nodes $n$ scales up (Figure~\ref{fig:time}(a)), \name\ achieves up to 70$\times$ faster speed and 50$\times$ scale up in $n$ compared with baselines. When the number of graphs $K$ scales up (Figure~\ref{fig:time}(b)), \name\ achieves up to 360$\times$ faster speed and 4$\times$ scale up in $K$ compared with baselines. Such substantial outperformance can be attributed to the hierarchical nature of \name (i.e., the block-diagonal structure of $\ts$). In contrast, other baselines with dense alignment tensor are out-of-memory (OOM) when $K\geq 5$.

For space complexity, when the number of nodes $n$ scales up (Figure~\ref{fig:space}(a)), \name\ only consumes $\frac{1}{130}\times$ memory and scales up 10$\times$ in $n$ compared with baselines. When the number of graphs $K$ scales up (Figure~\ref{fig:space}(b)), \name\ only consumes $\frac{1}{1000}\times$ memory and scales up 5$\times$ in $K$ compared with baselines. Similarly, this is due to the utilization of block-diagonal property of $\ts$ achieving an exponential reduction in time and space complexities.

Besides, comparing \name\ with different cluster number $M$ in both Figures~\ref{fig:time} and~\ref{fig:space}. When the cluster size $\overline{n}$ is small (e.g., left-most points with the number of nodes = 100), larger $M$ results in higher time and space complexities as the cluster-level calculation is the dominant factor under such condition. However, when more nodes or networks are involved, both complexities are mostly dominated by the node-level alignment, where a larger $M$ is preferred for better scalability.

\begin{figure}[t]
    \centering
    \subfigure[]{\includegraphics[width = 0.48\textwidth, trim = 5 0 30 20, clip]{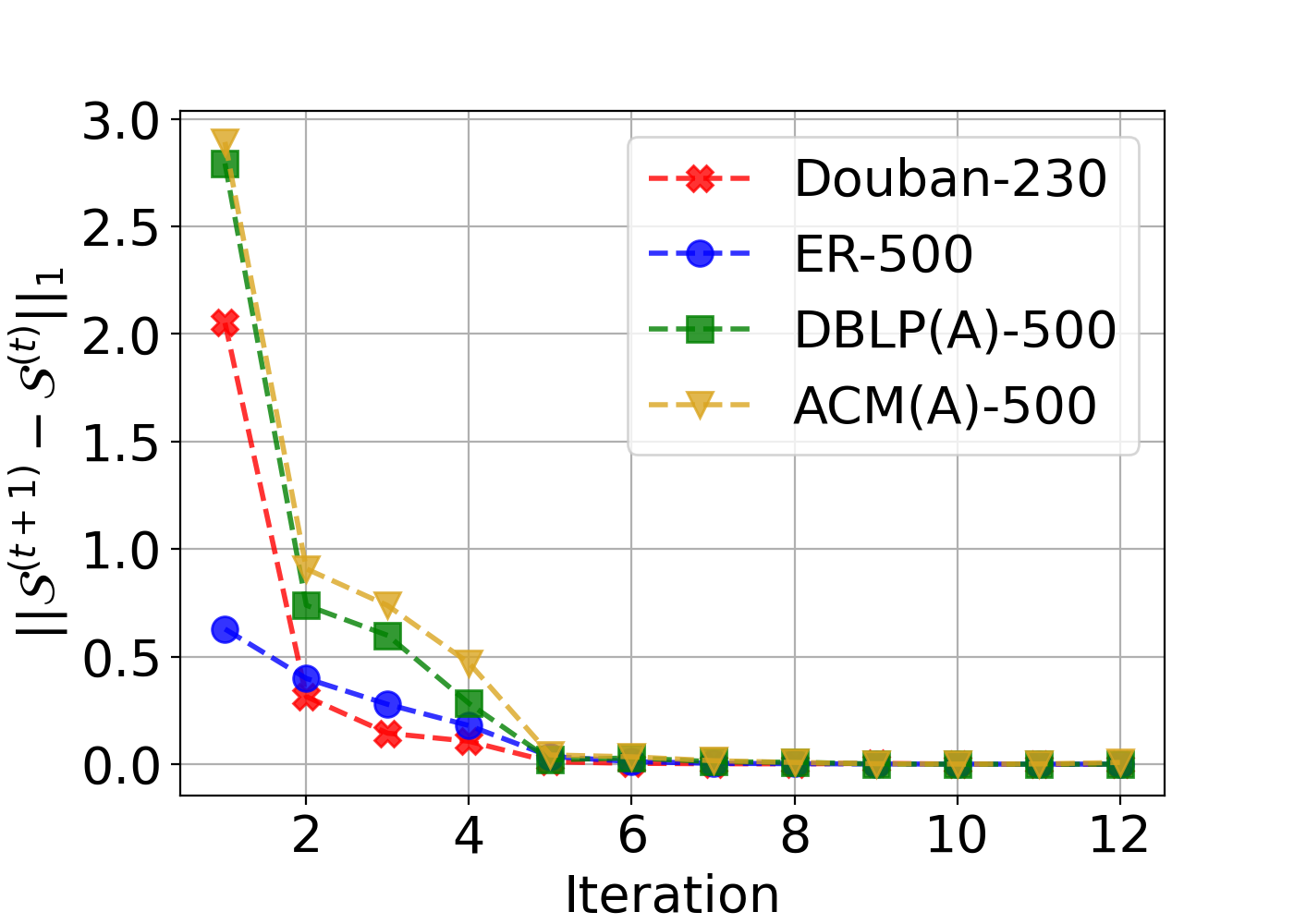}\label{fig:conv-1}}
    \subfigure[]{\includegraphics[width = 0.48\textwidth, trim = 5 0 30 20, clip]{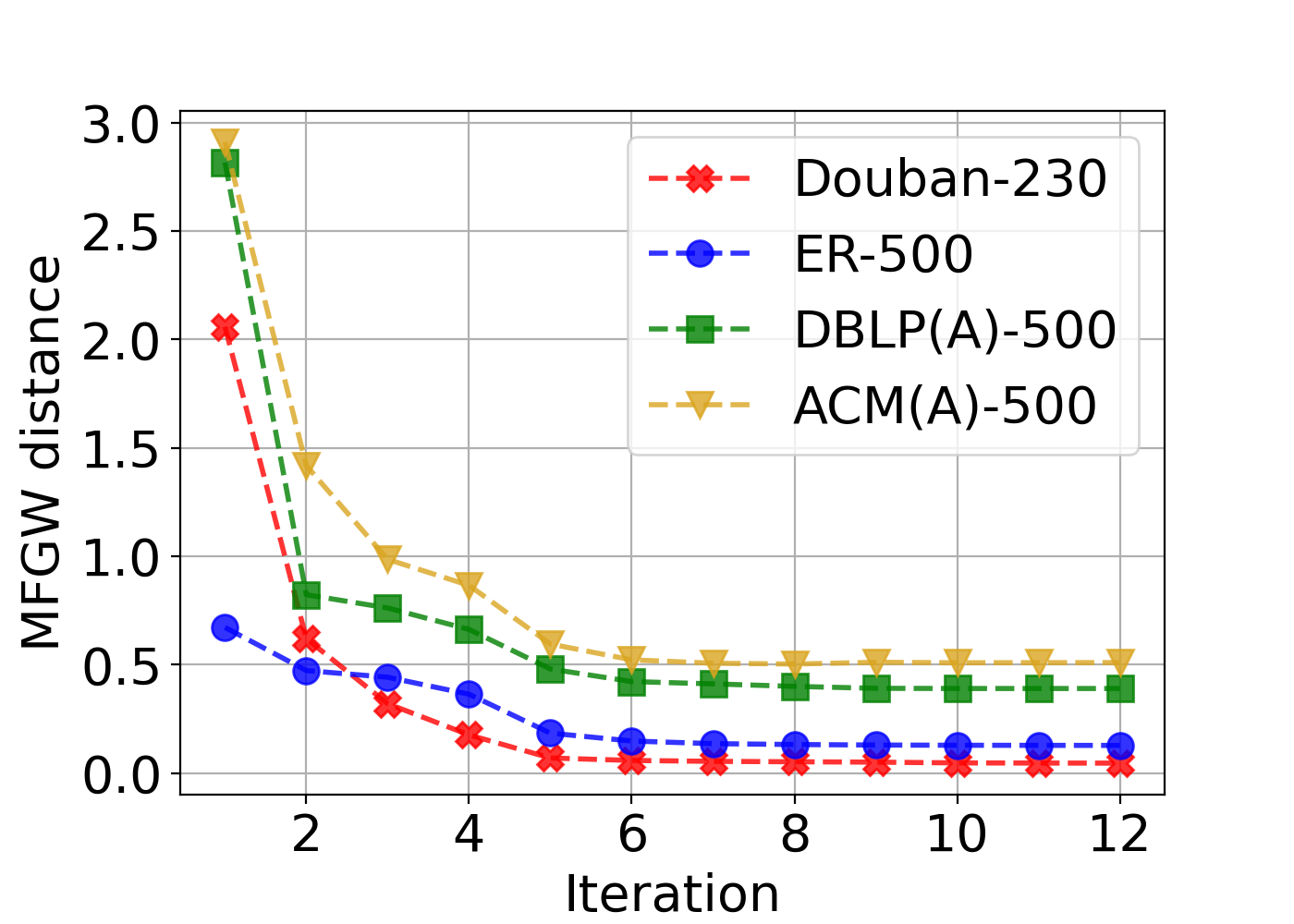}\label{fig:conv-2}}
    \caption{Convergence of the proximal point method: (a) Difference between two successive OT couplings (i.e., $\|\ts^{(t+1)}-\ts^{(t)}\|_1$); (b) MFGW distances along optimization.}
    \label{fig:conv}
\end{figure}
\subsection{Convergence Analysis}\label{sec:exp-conv}
We empirically validate the convergence guarantee in Proposition~\ref{prop:converge}. We evaluate the difference between two successive OT coupling tensors $\ts$, i.e., $\|\ts^{(t+1)}-\ts^{(t)}\|_1$, and the values of the MFGW distance along the proximal point optimization. As shown in Figure~\ref{fig:conv}, the objective function (i.e., MFGW distance) is non-increasing and the solution (i.e., $\ts$) converges along the proximal point iteration.

\subsection{Hyperparameter Study}\label{sec:exp-hyper}
We study the sensitivity to hyperparameters $\alpha$ and $\beta$ on the DBLP(A)-500 dataset, and results are shown in Figure~\ref{fig:hyper}. In general, the performance of \name\ is stable in the whole feasible hyperparameter space. Both metrics are quite robust to the restart probability $\beta$ but tend to decrease as $\alpha$ approaches 1. This is because attributes in DBLP(A)-500 dataset are quite informative, and the alignment almost exclusively depends on the structure when $\alpha\to 1$. Note that even under such an extreme case, \name\ still achieves good performance.
\begin{figure}[t]
    \centering
    \subfigure[]{\includegraphics[width = 0.47\linewidth, trim = 135 20 75 40, clip]{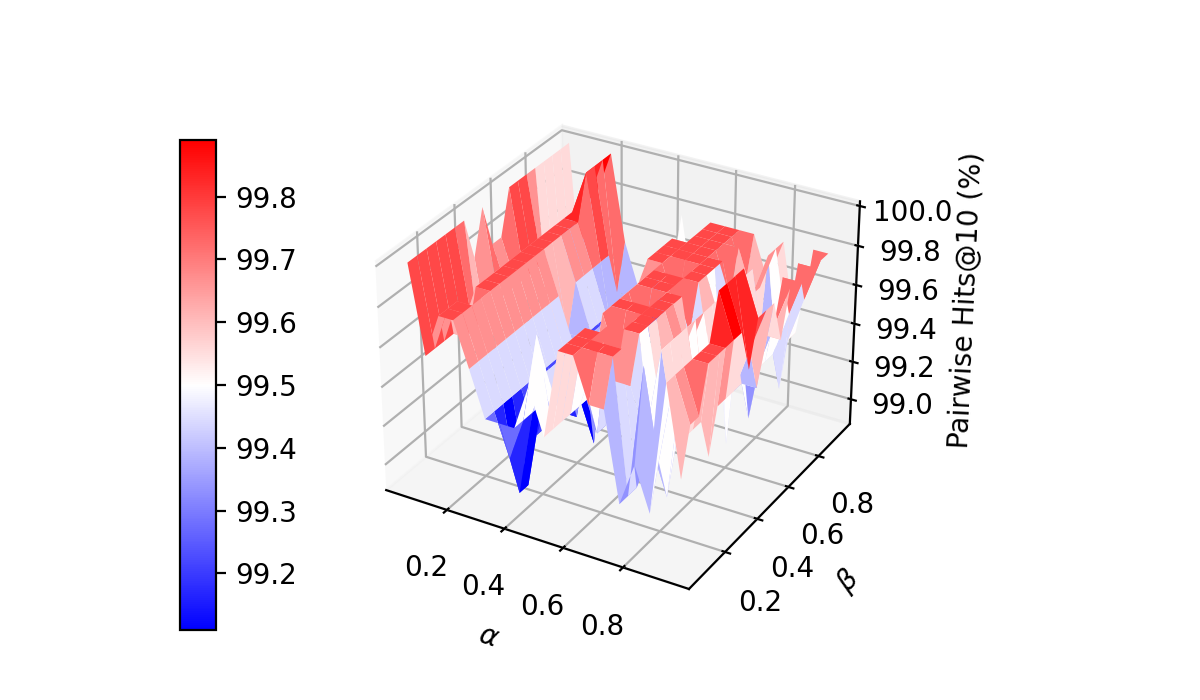}}\quad
    \subfigure[]{\includegraphics[width = 0.47\linewidth, trim = 135 20 75 40, clip]{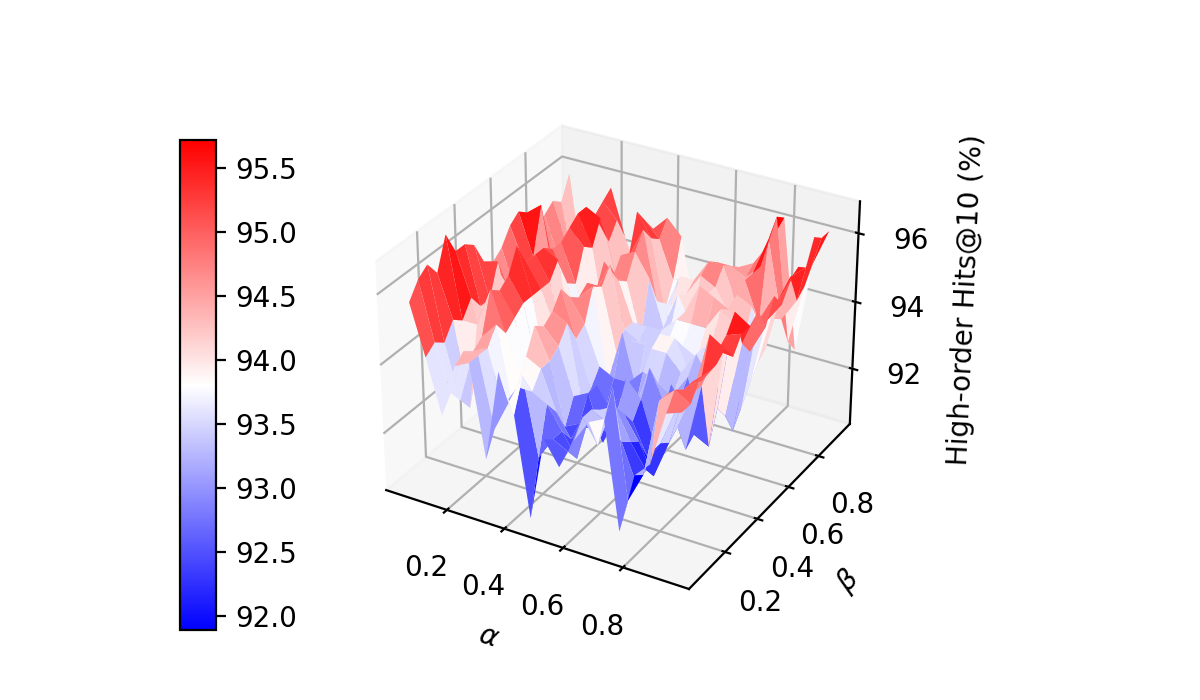}}
    \caption{Hyperparameter study on DBLP(A)-500: (a) Pairwise Hits@10; (b) High-order Hits@10.}
    \label{fig:hyper}
\end{figure}
\section{RELATED WORK}\label{sec:related}
\noindent\textbf{Network alignment.} Extensive efforts have been made for pairwise network alignment. Consistency-based methods~\cite{koutra2013big,singh2008global,zhang2016final} are built upon the alignment consistency principle, which assumes similar node pairs to have similar alignment results. Embedding-based methods~\cite{chu2019cross,du2019joint,zhang2021balancing,zhang2020nettrans,liu2016aligning,yan2021dynamic,yan2023reconciling,liu2022joint} generate low-dimensional node embeddings by pushing anchor node pairs as close as possible in the embedding space, enforced by contrastive loss~\cite{jing2021hdmi,jing2022x}. OT-based methods~\cite{maretic2022fgot,xu2019scalable,xu2019gromov,zeng2023parrot,zhao2020relaxed,zhao2020semi,wang2023wasserstein} represent graphs as distributions and optimize the distribution alignment by minimizing the Wasserstein-like discrepancy. To reduce the time complexity, a line of work leverages the hierarchy of graphs by finding clusters in graphs ~\cite{zhang2019multilevel,jing2022coin,wang2022unsupervised} to accelerate the pairwise alignment following a \textit{coarsen-align-interpolate} strategy. For multi-network alignment, the transitivity constraint was first proposed to ensure the consistency between alignments for different network pairs~\cite{chu2019cross,zhang2015multiple,zhou2020unsupervised}, but is hard to handle due to the non-convexity. The product graph, whose size becomes intractable in the multi-network setting, is adopted to model high-order alignment consistency~\cite{du2021sylvester,li2021scalable}. 
The low-rank approximation is explored for fast implicit solutions~\cite{du2021sylvester,li2021scalable,liu2016cross}, but high complexities are still inevitable when reconstructing the explicit solution.

\noindent\textbf{Optimal transport on graphs.} OT has achieved great success in handling geometric data such as graphs. Apart from network alignment reviewed above, OT has been applied in other graph-related tasks, including graph comparison~\cite{titouan2019optimal}, graph clustering~\cite{xu2019scalable}, graph compression~\cite{garg2019solving} and graph representation learning~\cite{vincent2021online,zeng2023generative}. The superiority of OT on graphs is rooted in its ability to capture the intrinsic geometry~\cite{peyre2016gromov} and compare objects in different metric spaces, e.g., graphs~\cite{memoli2011gromov}. While most OT literature is restricted to the two marginal cases, a few works~\cite{beier2022multi,fan2022complexity,pass2015multi} study the multi-marginal setting suffering from the complex problem formulation and intractable computation. In this paper, we fill this gap by generalizing the FGW distance to the multi-marginal setting, followed by a fast proximal point solution with convergence to a local optimum.
\section{CONCLUSION}\label{sec:con}
In this paper, we study the multi-network alignment problem from the view of hierarchical multi-marginal optimal transport (MOT). To handle the high computational complexity, we explore the hierarchical structure of graph data and decompose the original problem into smaller cluster-level and node-level alignment subproblems. To depict high-order node relationships, a position-aware cost is first generated based on the unified random walk with restart (RWR) embeddings, and the multi-marginal fused Gromov-Wasserstein (MFGW) distance is further leveraged to align multiple networks jointly. A fast algorithm is proposed with guaranteed convergence to a local optimum, reducing both time and space complexities by an exponential factor in terms of the number of networks compared with the straightforward solution. Extensive experiments validate the effectiveness and scalability of the proposed \name\ on the multi-network alignment task.

\newpage
\section*{Acknowledgements}
The ZZ and HT are partially supported by NSF (1947135, 
2134079, 
1939725, 
2316233, 
and 2324770
), DARPA (HR001121C0165), DHS (17STQAC00001-07-00), NIFA (2020-67021-32799) and ARO (W911NF2110088).

\bibliography{main}

\begin{thebibliography}{67}
\providecommand{\natexlab}[1]{#1}

\bibitem[{Alaux et~al.(2019)Alaux, Grave, Cuturi, and
  Joulin}]{alaux2018unsupervised}
Alaux, J.; Grave, E.; Cuturi, M.; and Joulin, A. 2019.
\newblock Unsupervised Hyper-alignment for Multilingual Word Embeddings.
\newblock In \emph{International Conference on Learning Representations}.

\bibitem[{Beier, Beinert, and Steidl(2022)}]{beier2022multi}
Beier, F.; Beinert, R.; and Steidl, G. 2022.
\newblock Multi-Marginal Gromov-Wasserstein Transport and Barycenters.
\newblock \emph{arXiv preprint arXiv:2205.06725}.

\bibitem[{Cang, Nie, and Zhao(2022)}]{cang2022supervised}
Cang, Z.; Nie, Q.; and Zhao, Y. 2022.
\newblock Supervised Optimal Transport.
\newblock \emph{SIAM Journal on Applied Mathematics}, 82(5): 1851--1877.

\bibitem[{Chen et~al.(2016)Chen, Tong, Xie, Ying, and He}]{chen2016fascinate}
Chen, C.; Tong, H.; Xie, L.; Ying, L.; and He, Q. 2016.
\newblock FASCINATE: fast cross-layer dependency inference on multi-layered
  networks.
\newblock In \emph{Proceedings of the 22nd ACM SIGKDD international conference
  on knowledge discovery and data mining}, 765--774.

\bibitem[{Chen et~al.(2020)Chen, Gan, Cheng, Li, Carin, and
  Liu}]{chen2020graph}
Chen, L.; Gan, Z.; Cheng, Y.; Li, L.; Carin, L.; and Liu, J. 2020.
\newblock Graph optimal transport for cross-domain alignment.
\newblock In \emph{International Conference on Machine Learning}, 1542--1553.
  PMLR.

\bibitem[{Chu et~al.(2019)Chu, Fan, Yao, Zhu, Huang, and Bi}]{chu2019cross}
Chu, X.; Fan, X.; Yao, D.; Zhu, Z.; Huang, J.; and Bi, J. 2019.
\newblock Cross-network embedding for multi-network alignment.
\newblock In \emph{The world wide web conference}, 273--284.

\bibitem[{Cuturi(2013)}]{cuturi2013sinkhorn}
Cuturi, M. 2013.
\newblock Sinkhorn distances: Lightspeed computation of optimal transport.
\newblock \emph{Advances in neural information processing systems}, 26.

\bibitem[{Cuturi and Doucet(2014)}]{cuturi2014fast}
Cuturi, M.; and Doucet, A. 2014.
\newblock Fast computation of Wasserstein barycenters.
\newblock In \emph{International Conference on Machine Learning}, 685--693.
  PMLR.

\bibitem[{Du, Liu, and Tong(2021)}]{du2021sylvester}
Du, B.; Liu, L.; and Tong, H. 2021.
\newblock Sylvester Tensor Equation for Multi-Way Association.
\newblock In \emph{Proceedings of the 27th ACM SIGKDD Conference on Knowledge
  Discovery \& Data Mining}, 311--321.

\bibitem[{Du and Tong(2018)}]{du2018fasten}
Du, B.; and Tong, H. 2018.
\newblock Fasten: Fast sylvester equation solver for graph mining.
\newblock In \emph{Proceedings of the 24th ACM SIGKDD International Conference
  on Knowledge Discovery \& Data Mining}, 1339--1347.

\bibitem[{Du et~al.(2021)Du, Zhang, Yan, and Tong}]{du2021new}
Du, B.; Zhang, S.; Yan, Y.; and Tong, H. 2021.
\newblock New frontiers of multi-network mining: Recent developments and future
  trend.
\newblock In \emph{Proceedings of the 27th ACM SIGKDD Conference on Knowledge
  Discovery \& Data Mining}, 4038--4039.

\bibitem[{Du, Yan, and Zha(2019)}]{du2019joint}
Du, X.; Yan, J.; and Zha, H. 2019.
\newblock Joint Link Prediction and Network Alignment via Cross-graph
  Embedding.
\newblock In \emph{IJCAI}, 2251--2257.

\bibitem[{Fan et~al.(2022)Fan, Haasler, Karlsson, and Chen}]{fan2022complexity}
Fan, J.; Haasler, I.; Karlsson, J.; and Chen, Y. 2022.
\newblock On the complexity of the optimal transport problem with
  graph-structured cost.
\newblock In \emph{International conference on artificial intelligence and
  statistics}, 9147--9165. PMLR.

\bibitem[{Ferradans et~al.(2014)Ferradans, Papadakis, Peyr{\'e}, and
  Aujol}]{ferradans2014regularized}
Ferradans, S.; Papadakis, N.; Peyr{\'e}, G.; and Aujol, J.-F. 2014.
\newblock Regularized discrete optimal transport.
\newblock \emph{SIAM Journal on Imaging Sciences}, 7(3): 1853--1882.

\bibitem[{Gao, Huang, and Li(2021)}]{gao2021unsupervised}
Gao, J.; Huang, X.; and Li, J. 2021.
\newblock Unsupervised graph alignment with wasserstein distance discriminator.
\newblock In \emph{Proceedings of the 27th ACM SIGKDD Conference on Knowledge
  Discovery \& Data Mining}, 426--435.

\bibitem[{Garg and Jaakkola(2019)}]{garg2019solving}
Garg, V.; and Jaakkola, T. 2019.
\newblock Solving graph compression via optimal transport.
\newblock \emph{Advances in Neural Information Processing Systems}, 32.

\bibitem[{Heimann et~al.(2018)Heimann, Shen, Safavi, and
  Koutra}]{heimann2018regal}
Heimann, M.; Shen, H.; Safavi, T.; and Koutra, D. 2018.
\newblock Regal: Representation learning-based graph alignment.
\newblock In \emph{Proceedings of the 27th ACM international conference on
  information and knowledge management}, 117--126.

\bibitem[{Jing et~al.(2022{\natexlab{a}})Jing, Feng, Xiang, Chen, Chen, and
  Tong}]{jing2022x}
Jing, B.; Feng, S.; Xiang, Y.; Chen, X.; Chen, Y.; and Tong, H.
  2022{\natexlab{a}}.
\newblock X-GOAL: multiplex heterogeneous graph prototypical contrastive
  learning.
\newblock In \emph{Proceedings of the 31st ACM International Conference on
  Information \& Knowledge Management}, 894--904.

\bibitem[{Jing, Park, and Tong(2021)}]{jing2021hdmi}
Jing, B.; Park, C.; and Tong, H. 2021.
\newblock Hdmi: High-order deep multiplex infomax.
\newblock In \emph{Proceedings of the Web Conference 2021}, 2414--2424.

\bibitem[{Jing et~al.(2023)Jing, Yan, Ding, Park, Zhu, Liu, and
  Tong}]{jing2023sterling}
Jing, B.; Yan, Y.; Ding, K.; Park, C.; Zhu, Y.; Liu, H.; and Tong, H. 2023.
\newblock STERLING: Synergistic Representation Learning on Bipartite Graphs.
\newblock \emph{arXiv preprint arXiv:2302.05428}.

\bibitem[{Jing et~al.(2022{\natexlab{b}})Jing, Yan, Zhu, and
  Tong}]{jing2022coin}
Jing, B.; Yan, Y.; Zhu, Y.; and Tong, H. 2022{\natexlab{b}}.
\newblock Coin: Co-cluster infomax for bipartite graphs.
\newblock \emph{arXiv preprint arXiv:2206.00006}.

\bibitem[{Koutra, Tong, and Lubensky(2013)}]{koutra2013big}
Koutra, D.; Tong, H.; and Lubensky, D. 2013.
\newblock Big-align: Fast bipartite graph alignment.
\newblock In \emph{2013 IEEE 13th international conference on data mining},
  389--398. IEEE.

\bibitem[{Li et~al.(2019)Li, Wang, Wang, Yu, Liang, Liu, and
  Li}]{li2019adversarial}
Li, C.; Wang, S.; Wang, Y.; Yu, P.; Liang, Y.; Liu, Y.; and Li, Z. 2019.
\newblock Adversarial learning for weakly-supervised social network alignment.
\newblock In \emph{Proceedings of the AAAI conference on artificial
  intelligence}, 996--1003.

\bibitem[{Li et~al.(2021)Li, Petegrosso, Smith, Sterling, Karypis, and
  Kuang}]{li2021scalable}
Li, Z.; Petegrosso, R.; Smith, S.; Sterling, D.; Karypis, G.; and Kuang, R.
  2021.
\newblock Scalable label propagation for multi-relational learning on the
  tensor product of graphs.
\newblock \emph{IEEE Transactions on Knowledge and Data Engineering}.

\bibitem[{Liu and Yang(2016)}]{liu2016cross}
Liu, H.; and Yang, Y. 2016.
\newblock Cross-graph learning of multi-relational associations.
\newblock In \emph{International Conference on Machine Learning}, 2235--2243.
  PMLR.

\bibitem[{Liu et~al.(2016)Liu, Cheung, Li, and Liao}]{liu2016aligning}
Liu, L.; Cheung, W.~K.; Li, X.; and Liao, L. 2016.
\newblock Aligning Users across Social Networks Using Network Embedding.
\newblock In \emph{IJCAI}, 1774--1780.

\bibitem[{Liu et~al.(2021)Liu, Du, Fung, Ji, Xu, and Tong}]{liu2021kompare}
Liu, L.; Du, B.; Fung, Y.~R.; Ji, H.; Xu, J.; and Tong, H. 2021.
\newblock Kompare: A knowledge graph comparative reasoning system.
\newblock In \emph{Proceedings of the 27th ACM SIGKDD Conference on Knowledge
  Discovery \& Data Mining}, 3308--3318.

\bibitem[{Liu et~al.(2019)Liu, Du, Tong et~al.}]{liu2019g}
Liu, L.; Du, B.; Tong, H.; et~al. 2019.
\newblock G-finder: Approximate attributed subgraph matching.
\newblock In \emph{2019 IEEE international conference on big data (big data)},
  513--522. IEEE.

\bibitem[{Liu et~al.(2022)Liu, Du, Xu, Xia, and Tong}]{liu2022joint}
Liu, L.; Du, B.; Xu, J.; Xia, Y.; and Tong, H. 2022.
\newblock Joint knowledge graph completion and question answering.
\newblock In \emph{Proceedings of the 28th ACM SIGKDD Conference on Knowledge
  Discovery and Data Mining}, 1098--1108.

\bibitem[{Man et~al.(2016)Man, Shen, Liu, Jin, and Cheng}]{man2016predict}
Man, T.; Shen, H.; Liu, S.; Jin, X.; and Cheng, X. 2016.
\newblock Predict anchor links across social networks via an embedding
  approach.
\newblock In \emph{Ijcai}, volume~16, 1823--1829.

\bibitem[{Maretic et~al.(2019)Maretic, El~Gheche, Chierchia, and
  Frossard}]{petric2019got}
Maretic, H.~P.; El~Gheche, M.; Chierchia, G.; and Frossard, P. 2019.
\newblock GOT: an optimal transport framework for graph comparison.
\newblock \emph{Advances in Neural Information Processing Systems}, 32.

\bibitem[{Maretic et~al.(2022)Maretic, El~Gheche, Chierchia, and
  Frossard}]{maretic2022fgot}
Maretic, H.~P.; El~Gheche, M.; Chierchia, G.; and Frossard, P. 2022.
\newblock FGOT: Graph distances based on filters and optimal transport.
\newblock In \emph{Proceedings of the AAAI Conference on Artificial
  Intelligence}, 7710--7718.

\bibitem[{Maretic et~al.(2020)Maretic, Gheche, Minder, Chierchia, and
  Frossard}]{maretic2020wasserstein}
Maretic, H.~P.; Gheche, M.~E.; Minder, M.; Chierchia, G.; and Frossard, P.
  2020.
\newblock Wasserstein-based graph alignment.
\newblock \emph{arXiv preprint arXiv:2003.06048}.

\bibitem[{M{\'e}moli(2011)}]{memoli2011gromov}
M{\'e}moli, F. 2011.
\newblock Gromov--Wasserstein distances and the metric approach to object
  matching.
\newblock \emph{Foundations of computational mathematics}, 11: 417--487.

\bibitem[{Mena et~al.(2018)Mena, Belanger, Linderman, and
  Snoek}]{mena2018learning}
Mena, G.; Belanger, D.; Linderman, S.; and Snoek, J. 2018.
\newblock Learning latent permutations with gumbel-sinkhorn networks.
\newblock \emph{arXiv preprint arXiv:1802.08665}.

\bibitem[{Park et~al.(2022)Park, Tran, Shin, and Cao}]{park2022grad}
Park, J.-D.; Tran, C.; Shin, W.-Y.; and Cao, X. 2022.
\newblock Grad-Align: Gradual Network Alignment via Graph Neural Networks
  (Student Abstract).
\newblock In \emph{Proceedings of the AAAI conference on artificial
  intelligence}, 13027--13028.

\bibitem[{Pass(2015)}]{pass2015multi}
Pass, B. 2015.
\newblock Multi-marginal optimal transport: theory and applications.
\newblock \emph{ESAIM: Mathematical Modelling and Numerical Analysis}, 49(6):
  1771--1790.

\bibitem[{Peyr{\'e}, Cuturi, and Solomon(2016)}]{peyre2016gromov}
Peyr{\'e}, G.; Cuturi, M.; and Solomon, J. 2016.
\newblock Gromov-wasserstein averaging of kernel and distance matrices.
\newblock In \emph{International Conference on Machine Learning}, 2664--2672.

\bibitem[{Razaviyayn, Hong, and Luo(2013)}]{razaviyayn2013unified}
Razaviyayn, M.; Hong, M.; and Luo, Z.-Q. 2013.
\newblock A unified convergence analysis of block successive minimization
  methods for nonsmooth optimization.
\newblock \emph{SIAM Journal on Optimization}, 23(2): 1126--1153.

\bibitem[{Scetbon, Cuturi, and Peyr{\'e}(2021)}]{scetbon2021low}
Scetbon, M.; Cuturi, M.; and Peyr{\'e}, G. 2021.
\newblock Low-rank sinkhorn factorization.
\newblock In \emph{International Conference on Machine Learning}, 9344--9354.
  PMLR.

\bibitem[{Scetbon, Peyr{\'e}, and Cuturi(2022)}]{scetbon2022linear}
Scetbon, M.; Peyr{\'e}, G.; and Cuturi, M. 2022.
\newblock Linear-time gromov wasserstein distances using low rank couplings and
  costs.
\newblock In \emph{International Conference on Machine Learning}, 19347--19365.
  PMLR.

\bibitem[{Singh, Xu, and Berger(2008)}]{singh2008global}
Singh, R.; Xu, J.; and Berger, B. 2008.
\newblock Global alignment of multiple protein interaction networks with
  application to functional orthology detection.
\newblock \emph{Proceedings of the National Academy of Sciences}, 105(35):
  12763--12768.

\bibitem[{Swoboda et~al.(2019)Swoboda, Mokarian, Theobalt, Bernard
  et~al.}]{swoboda2019convex}
Swoboda, P.; Mokarian, A.; Theobalt, C.; Bernard, F.; et~al. 2019.
\newblock A convex relaxation for multi-graph matching.
\newblock In \emph{Proceedings of the IEEE/CVF Conference on Computer Vision
  and Pattern Recognition}, 11156--11165.

\bibitem[{Tang et~al.(2008)Tang, Zhang, Yao, Li, Zhang, and
  Su}]{tang2008arnetminer}
Tang, J.; Zhang, J.; Yao, L.; Li, J.; Zhang, L.; and Su, Z. 2008.
\newblock Arnetminer: extraction and mining of academic social networks.
\newblock In \emph{Proceedings of the 14th ACM SIGKDD international conference
  on Knowledge discovery and data mining}, 990--998.

\bibitem[{Titouan et~al.(2019)Titouan, Courty, Tavenard, and
  Flamary}]{titouan2019optimal}
Titouan, V.; Courty, N.; Tavenard, R.; and Flamary, R. 2019.
\newblock Optimal transport for structured data with application on graphs.
\newblock In \emph{International Conference on Machine Learning}, 6275--6284.
  PMLR.

\bibitem[{Tong, Faloutsos, and Pan(2006)}]{tong2006fast}
Tong, H.; Faloutsos, C.; and Pan, J.-Y. 2006.
\newblock Fast random walk with restart and its applications.
\newblock In \emph{Sixth international conference on data mining (ICDM'06)},
  613--622. IEEE.

\bibitem[{Vincent-Cuaz et~al.(2021)Vincent-Cuaz, Vayer, Flamary, Corneli, and
  Courty}]{vincent2021online}
Vincent-Cuaz, C.; Vayer, T.; Flamary, R.; Corneli, M.; and Courty, N. 2021.
\newblock Online graph dictionary learning.
\newblock In \emph{International Conference on Machine Learning}, 10564--10574.

\bibitem[{Wang, Dou, and Zhang(2022)}]{wang2022unsupervised}
Wang, Z.; Dou, J.; and Zhang, Y. 2022.
\newblock Unsupervised sentence textual similarity with compositional phrase
  semantics.
\newblock \emph{arXiv preprint arXiv:2210.02284}.

\bibitem[{Wang et~al.(2023)Wang, Fei, Yin, Song, Wong, and
  See}]{wang2023wasserstein}
Wang, Z.; Fei, W.; Yin, H.; Song, Y.; Wong, G.~Y.; and See, S. 2023.
\newblock Wasserstein-Fisher-Rao Embedding: Logical Query Embeddings with Local
  Comparison and Global Transport.
\newblock \emph{arXiv preprint arXiv:2305.04034}.

\bibitem[{Xu, Luo, and Carin(2019)}]{xu2019scalable}
Xu, H.; Luo, D.; and Carin, L. 2019.
\newblock Scalable gromov-wasserstein learning for graph partitioning and
  matching.
\newblock \emph{Advances in neural information processing systems}, 32.

\bibitem[{Xu et~al.(2019)Xu, Luo, Zha, and Duke}]{xu2019gromov}
Xu, H.; Luo, D.; Zha, H.; and Duke, L.~C. 2019.
\newblock Gromov-wasserstein learning for graph matching and node embedding.
\newblock In \emph{International Conference on Machine Learning}, 6932--6941.

\bibitem[{Yan et~al.(2024)Yan, Hu, Zhou, Liu, Zeng, Yuzhong, Pan, Chen, Das,
  and Tong}]{yan2024pacer}
Yan, Y.; Hu, Y.; Zhou, Q.~Z.; Liu, L.; Zeng, Z.; Yuzhong, C.; Pan, M.; Chen,
  H.; Das, M.; and Tong, H. 2024.
\newblock {PACER}: Network Embedding From Positional to Structural.
\newblock In \emph{The Web Conference 2024}.

\bibitem[{Yan et~al.(2023)Yan, Jing, Liu, Wang, Li, Abdelzaher, and
  Tong}]{yan2023reconciling}
Yan, Y.; Jing, B.; Liu, L.; Wang, R.; Li, J.; Abdelzaher, T.; and Tong, H.
  2023.
\newblock Reconciling Competing Sampling Strategies of Network Embedding.
\newblock In \emph{Thirty-seventh Conference on Neural Information Processing
  Systems}.

\bibitem[{Yan et~al.(2021)Yan, Liu, Ban, Jing, and Tong}]{yan2021dynamic}
Yan, Y.; Liu, L.; Ban, Y.; Jing, B.; and Tong, H. 2021.
\newblock Dynamic knowledge graph alignment.
\newblock In \emph{Proceedings of the AAAI conference on artificial
  intelligence}, 4564--4572.

\bibitem[{Yan, Zhang, and Tong(2021)}]{yan2021bright}
Yan, Y.; Zhang, S.; and Tong, H. 2021.
\newblock BRIGHT: A bridging algorithm for network alignment.
\newblock In \emph{Proceedings of the Web Conference 2021}, 3907--3917.

\bibitem[{Yan et~al.(2022)Yan, Zhou, Li, Abdelzaher, and
  Tong}]{yan2022dissecting}
Yan, Y.; Zhou, Q.; Li, J.; Abdelzaher, T.; and Tong, H. 2022.
\newblock Dissecting cross-layer dependency inference on multi-layered
  inter-dependent networks.
\newblock In \emph{Proceedings of the 31st ACM International Conference on
  Information \& Knowledge Management}, 2341--2351.

\bibitem[{Zeng et~al.(2023{\natexlab{a}})Zeng, Zhang, Xia, and
  Tong}]{zeng2023parrot}
Zeng, Z.; Zhang, S.; Xia, Y.; and Tong, H. 2023{\natexlab{a}}.
\newblock PARROT: Position-Aware Regularized Optimal Transport for Network
  Alignment.
\newblock In \emph{Proceedings of the ACM Web Conference 2023}, 372--382.

\bibitem[{Zeng et~al.(2023{\natexlab{b}})Zeng, Zhu, Xia, Zeng, and
  Tong}]{zeng2023generative}
Zeng, Z.; Zhu, R.; Xia, Y.; Zeng, H.; and Tong, H. 2023{\natexlab{b}}.
\newblock Generative graph dictionary learning.
\newblock In \emph{International Conference on Machine Learning}, 40749--40769.
  PMLR.

\bibitem[{Zhang and Philip(2015)}]{zhang2015multiple}
Zhang, J.; and Philip, S.~Y. 2015.
\newblock Multiple anonymized social networks alignment.
\newblock In \emph{2015 IEEE International Conference on Data Mining},
  599--608. IEEE.

\bibitem[{Zhang and Tong(2016)}]{zhang2016final}
Zhang, S.; and Tong, H. 2016.
\newblock Final: Fast attributed network alignment.
\newblock In \emph{Proceedings of the 22nd ACM SIGKDD international conference
  on knowledge discovery and data mining}, 1345--1354.

\bibitem[{Zhang et~al.(2021)Zhang, Tong, Jin, Xia, and
  Guo}]{zhang2021balancing}
Zhang, S.; Tong, H.; Jin, L.; Xia, Y.; and Guo, Y. 2021.
\newblock Balancing Consistency and Disparity in Network Alignment.
\newblock In \emph{Proceedings of the 27th ACM SIGKDD Conference on Knowledge
  Discovery \& Data Mining}, 2212--2222.

\bibitem[{Zhang et~al.(2019)Zhang, Tong, Maciejewski, and
  Eliassi-Rad}]{zhang2019multilevel}
Zhang, S.; Tong, H.; Maciejewski, R.; and Eliassi-Rad, T. 2019.
\newblock Multilevel network alignment.
\newblock In \emph{The World Wide Web Conference}, 2344--2354.

\bibitem[{Zhang et~al.(2020)Zhang, Tong, Xia, Xiong, and
  Xu}]{zhang2020nettrans}
Zhang, S.; Tong, H.; Xia, Y.; Xiong, L.; and Xu, J. 2020.
\newblock Nettrans: Neural cross-network transformation.
\newblock In \emph{Proceedings of the 26th ACM SIGKDD International Conference
  on Knowledge Discovery \& Data Mining}, 986--996.

\bibitem[{Zhao et~al.(2020{\natexlab{a}})Zhao, Wang, Wu, and
  Zhang}]{zhao2020relaxed}
Zhao, X.; Wang, Z.; Wu, H.; and Zhang, Y. 2020{\natexlab{a}}.
\newblock A relaxed matching procedure for unsupervised BLI.
\newblock \emph{arXiv preprint arXiv:2010.07095}.

\bibitem[{Zhao et~al.(2020{\natexlab{b}})Zhao, Wang, Wu, and
  Zhang}]{zhao2020semi}
Zhao, X.; Wang, Z.; Wu, H.; and Zhang, Y. 2020{\natexlab{b}}.
\newblock Semi-supervised bilingual lexicon induction with two-way interaction.
\newblock \emph{arXiv preprint arXiv:2010.07101}.

\bibitem[{Zhong et~al.(2012)Zhong, Fan, Wang, Xiao, and Li}]{zhong2012comsoc}
Zhong, E.; Fan, W.; Wang, J.; Xiao, L.; and Li, Y. 2012.
\newblock Comsoc: adaptive transfer of user behaviors over composite social
  network.
\newblock In \emph{Proceedings of the 18th ACM SIGKDD international conference
  on Knowledge discovery and data mining}, 696--704.

\bibitem[{Zhou et~al.(2020)Zhou, Ren, Jin, Zhang, Dou, and
  Yan}]{zhou2020unsupervised}
Zhou, Y.; Ren, J.; Jin, R.; Zhang, Z.; Dou, D.; and Yan, D. 2020.
\newblock Unsupervised multiple network alignment with multinominal gan and
  variational inference.
\newblock In \emph{2020 IEEE International Conference on Big Data (Big Data)},
  868--877.

\end{thebibliography}

\appendix
\clearpage

\section{Detailed Algorithm}\label{app:algo}
\begin{algorithm}[htbp]
\small
    \caption{Cluster-level alignment}\label{algo:cluster}
    \KwInput{$K$ graphs $\mathcal{G}_i(\mathbf{A}_i,\mathbf{X}_i)$, hyperparameter $\alpha,\lambda,T,L$.}
    \KwOutput{The cluster-level alignments $\mathcal{C}_i^j$.}
    Initialize $\mathbf{X}_b^{(0)}\!\!,\mathbf{A}_b^{(0)}\!\!,\bm{\mu}_b,\bm{\mu}_i,\mathbf{S}_i^{(0)}\!=\!\bm{\mu}_i\!\!\otimes\!\bm{\mu}_b,\forall i\!=\!1,2,...,K$\;
    \For{$t=0,1,\dots,T$}{
        \For{$i=1,2,\dots,K$}{
            Compute cross-cost $\mathbf{C}_{\text{cross}_i}^{(t)}(v,u)=\|\mathbf{X}_i(v)-\mathbf{X}_b^{(t)}(u)\|_2$\;
            Compute modified cost $\mathbf{Q}^{(t)}_i\!\!=\!(1\!-\!\alpha)\mathbf{C}^{(t)}_{\text{cross}_i} \!+\! \alpha\mathbf{L}_i^{(t)}\!\!-\!\lambda\log\mathbf{S}_i^{(t)}\!$\;
            \For{$l=0,1,\dots,L$}{
                Compute scaling vectors $\mathbf{u}_i^{(l)}\!\!,\mathbf{u}_b^{(l)}$ by Eq.~\eqref{eq:sinkhorn1};
            }
            Update $\mathbf{S}_i^{(t+1)}$ by Eq.~\eqref{eq:sinkhorn2}\;
        }
        Update $\mathbf{A}_b^{(t+1)}\!\!=\!\frac{\mathbf{1}_{M\times M}}{\bm{\mu}_b\bm{\mu}_b^{\T}}\sum_{i=1}^{K}\!\!\left(\mathbf{S}_i^{(t+1)^{\T}}\!\mathbf{A}_i\mathbf{S}_i^{(t+1)}\right)$\;
        Update $\mathbf{X}^{(t+1)}_b\!\!=\!\sum_{i=1}^{K}\!\!\left(\!\text{diag}\left(\frac{\mathbf{1}_{M}}{\bm{\mu}_b}\right)\!\mathbf{S}_i^{(t+1)^{\T}}\!\mathbf{X}_i\!\right)$\;
    }
    Form cluster alignments $\mathcal{C}_i^j=\{v_i\}$ for $v_i\in\G_i$ such that $v_j=\arg\max_{v\in\G_b}\mathbf{S}_i^{(T+1)}(v_i,v)$\;
    \Return Cluster alignments $\C^j=\bigcup_{i=1}^K\C_i^j,\forall j=1,\dots,M$.
\end{algorithm}

\begin{algorithm}[htbp]
\small
    \caption{Node-level alignment}\label{algo:node}
    \KwInput{$K$ graphs $\mathcal{G}_i(\!\mathbf{A}_i,\!\mathbf{X}_i)$, cross-cost tensor $\tc$, hyperparameter $\alpha,\lambda,T,L$.}
    \KwOutput{The node-level alignment tensor $\ts$.}
    Initialize $\bm{\mu}_i,\forall i=1,2,\dots,K$ and a feasible solution $\ts^{(0)}=\bigotimes_{i=1}^K\bm{\mu}_i$\;
    \For{$t=0,1,\dots,T$}{
        Compute $\bm{\mathcal{Q}}^{(t)}=(1-\alpha)\tc + \alpha\tl^{(t)}-\lambda\log\ts^{(t)}$\;
        \For{$l=0,1,\dots,L$}{
            Compute scaling vectors $\mathbf{u}_i^{(l)},\forall i = 1,2,\dots,K$ by Eq.~\eqref{eq:sinkhorn1}; 
        }
        Update $\ts^{(t+1)}$ by Eq.~\eqref{eq:sinkhorn2}\;
    }
    \Return Node alignment tensor $\ts^{(T+1)}$.
\end{algorithm}

\begin{algorithm}[htbp]
\small
    \caption{Hierarchical multi-marginal optimal transport (\name)}\label{algo:hot}
    \KwInput{$K$ graphs $\mathcal{G}_i(\!\mathbf{A}_i,\!\mathbf{X}_i)$, anchor node sets $\mathcal{L}$, cluster number $M$, hyperparameter $\alpha,\lambda,L,T$.}
    \KwOutput{The alignment tensor $\ts$.}
    Compute RWR scores $\mathbf{R}_i,\forall i\!=\!1,2,\dots,K$ by Eq.~\eqref{eq:rwr}\;
    Compute cluster-level alignment $\C^j,\forall j\!=\!1,2,\dots,M$ by Algo.~\ref{algo:cluster}\;
    \For{$j=1,2,\ldots,M$}{
        Compute cross-cost tensor $\tc^j$ for cluster $\mathcal{C}^j$ by Eq.~\eqref{eq:costtensor}\;
        Compute node alignment tensor $\ts^j$ by Algorithm~\ref{algo:node}\;
    }
    \Return $\ts=\text{diag}(\ts^1,\ldots,\ts^M)$.
\end{algorithm}

\section{Proof}\label{app:proof}
\mfgw*
\begin{proof}
    The Wasserstein term, i.e., $\langle(1-\alpha)\tc,\ts\rangle$, is straightforward, and we mainly focus on re-formulating the Gromov-Wasserstein part into the tensor form. The Gromov-Wasserstein term in Eq.~\eqref{eq:mfgw1} can be written as:
    \begin{equation*}
            \small
        \begin{aligned}
        &\sum_{\substack{1\leq j<k\leq K\\v_{1},...,v_{K}\\ v_1',..., v_K'}}\left|\mathbf{C}_j(v_{j},\!v_{j'})-\mathbf{C}_k(v_{k},\!v_{k}')\right|^2\ts(v_{1},...,v_{K})\ts(v_{1}',...,v_{K}')\\
        =&\!\!\!\sum_{v_{1},...,v_{K}}\!\!\!\underbrace{\left[\!\sum_{1\leq j<k\leq K\atop v_{1}',...,v_{K}'} \!\!\!\!\left|\mathbf{C}_j(v_{j},\!v_{j}')\!-\!\mathbf{C}_k(v_{k},\!v_{k}')\right|^2\!\!\ts(v_{1}',...,v_{K}')\!\right]}_{\tl(v_1,...,v_K)}\!\!\ts(v_{1},...,v_{K})\\
        =&\langle\tl,\ts\rangle
    \end{aligned}
    \end{equation*}
    We further simplify $\tl(\mathbf{C}_1,...,\mathbf{C}_K,\tc,\ts)$ as follows:
    \begin{equation*}
    \small
        \begin{aligned}
        &\tl(v_1,...,v_K)\\
        =&(K-1)\sum_{j,v_{j}'} \mathbf{C}_j(v_{j},v_{j}')^2\underbrace{\sum_{v_{1}',...,v_{j-1}',v_{j+1}',... v_{K}'}\ts(v_{1}',... v_{K}')}_{\mathcal{P}_j(\ts)(v_{j}')}\\
        -&2\sum_{1\leq j<k\leq K}\sum_{v_{j}'}\mathbf{C}_j(v_{j},v_{j}')\sum_{v_{k}'}\mathbf{C}_k(v_{k},v_{k}')\underbrace{\sum_{\text{w/o }v_{j}', v_{k}'}\ts(v_{1}',... v_{K}')}_{\mathcal{P}_{i,j}(\ts)(v_{j}',v_{k}')}\\
        =& (K\!-\!1)\sum_{j=1}^K\mathbf{C}_j^2(v_j,\cdot)\mathcal{P}_j(\ts) - 2\!\!\!\!\!\!\sum_{1\leq j<k\leq K}\!\!\!\!\!\!\mathbf{C}_j(v_j,\cdot)\mathcal{P}_{j,k}(\ts)\mathbf{C}_k(v_k,\cdot)^\T
    \end{aligned}
    \end{equation*}
\end{proof}

\complexity*
\begin{proof}
    For space complexity, thanks to the block diagonal property of $\ts$, only non-zero elements in the small blocks need to be stored. Storing $K$ attributed networks takes $\mathcal{O}(K(m+nd))$ space. Storing cost tensors $\tc^j$ and alignment tensors $\ts^j$ each takes $\mathcal{O}(\overline{n}^K)$ space. Therefore, the overall space complexity for \name\ is $\mathcal{O}(M\overline{n}^K)$, which is an exponential reduction of space in terms of the number of graphs $K$ compared with the $\mathcal{O}(n^K)$ space complexity given by the straightforward method.

    For time complexity, the complexity is $\mathcal{O}(mn)$\footnote{The cost can be reduced to be linear in $m$ with the advancement of faster Sylvester Equation solver~\cite{du2018fasten}.} for the RWR calculation in Eq.~\eqref{eq:rwr}and $\mathcal{O}(M\overline{n}^K(|\mathcal{L}|+d))$ for the cost tensor calculation in Eq.~\eqref{eq:costtensor}. For cluster-level alignment calculation, the complexities are $\mathcal{O}(KTLMn)$ for the optimal couplings $\mathbf{S}_i$ in Eq.~\eqref{eq:opt_s}, $\mathcal{O}(TKMn^2)$ for adjacency matrices $\mathbf{A}_b$ in Eq.~\eqref{eq:opt_a} and $\mathcal{O}(TKMn(|\mathcal{L}|+d))$ for node attribute matrices $\mathbf{X}_b$ in Eq.~\eqref{eq:opt_x}~\cite{titouan2019optimal}. Thus, the complexity of cluster-level alignment is $\mathcal{O}(TKMn^2)$. For node-level alignment, the complexity lies in computing $\bm{\mathcal{Q}}$, $\mathbf{u}_i$ and $\ts$ (Eqs.~\eqref{eq:proximal}-\eqref{eq:sinkhorn2}) iteratively and equals $\mathcal{O}(TMK^2\overline{n}^K)$. Therefore, the overall time complexity of \name\ is $\mathcal{O}(TKM(n^2+K\overline{n}^K))$.
\end{proof}

\converge*
\begin{proof}
    We denote the objective function of MFGW distance under $q=2$ as:
    \begin{align*}
        &f(\ts) = \min_{\mathbf{S}\in\Pi(\bm{\mu}_1,\dots,\bm{\mu}_K)}\langle(1-\alpha)\tc + \alpha\tl,\ts\rangle\\
    \end{align*}
    where $\tl(v_1,...,v_K) = (K-1)\sum_{1\leq j\leq K}\mathbf{C}_j^2(v_j,\cdot) \mathcal{P}_j(\ts) - 2\sum_{1\leq j<k\leq K}\mathbf{C}_j(v_j,\cdot)\mathcal{P}_{j,k}(\ts)\mathbf{C}_k(v_k,\cdot)^\T$. The proximal point method decompose the above non-convex problem into a sequence of entropy-regularized multi-margianl OT problem as follows:
    \begin{equation*}
    \small
        \begin{aligned}
            u(\ts,\ts^{(t)})
            =&\!\min_{\mathbf{S}\in\Pi(\bm{\mu}_1,...,\bm{\mu}_K)}\!\langle(1-\alpha)\tc \!+\! \alpha\tl^{(t)},\ts\rangle \!+\! \text{KL}(\ts\|\ts^{(t)})
        \end{aligned}
    \end{equation*}
    where $\tl^{(t)}$ denotes $\tl$ with $\ts=\ts^{(t)}$. Note that the Gromov-Wasserstein term $\tl^{(t)}$ is a constant tensor in the above equation as $\ts^{(t)}$ is fixed during the optimization. Besides, it is obvious that the solution space $\mathcal{X}=\Pi(\bm{\mu}_1,\dots,\bm{\mu}_K)$ is a closed convex set. We then evaluate the following conditions to hold the global convergence of the proximal point method based on Theorem 1 in~\cite{razaviyayn2013unified}:
    \begin{itemize}
        \item \verb|C1|: $u(\ts,\ts)=f(\ts),\forall \ts\in\mathcal{X}$.
        \item \verb|C2|: $u(\ts,\ts')\geq f(\ts), \forall \ts,\ts'\in\mathcal{X}$.
        \item \verb|C3|: $u(\ts,\ts';d)|_{\ts=\ts'}=f'(\ts,d), \forall d$ with $\ts'+d\in\mathcal{X}$.
        \item \verb|C4|: $u(\ts,\ts')$ is continuous in $(\ts,\ts')$.
    \end{itemize}
    
    For \verb|C1|, it is obvious that $u(\ts,\ts) = f(\ts),\forall \ts\in\mathcal{X}$.

    For \verb|C2|, since $\text{KL}(\ts\|\ts^{(t)})\geq 0$ and the equation holds when $\ts=\ts^{(t)}$, we have $u(\ts,\ts')\geq f(\ts),\forall \ts,\ts'\in\mathcal{X}$.

    For \verb|C3|, according to Proposition 1 in~\cite{razaviyayn2013unified}, \verb|C3| holds when \verb|C1| and \verb|C2| hold.

    For \verb|C4|, it is easy to validate that $u(\ts,\ts')$ is continuous.

    Therefore, the proximal point method has global convergence to a stationary point of the MFGW problem.
\end{proof}

\bound*
\begin{proof}
    For clarity, we slightly abuse the notation $\text{FGW}_{2,\alpha}(\G_j,\G_k;\mathbf{S})$ to denote the value of FGW distance under coupling $\mathbf{S}$. We first focus on the Wasserstein distance part in the MFGW distance in Eq.~\eqref{eq:mfgw1}, which can be reformulated as follows:
    \begin{equation}\label{eq:mwd}
    \small
        \begin{aligned}
            &\sum_{v_{1},\dots,v_{K}}\sum_{1\leq j<k\leq K}\|\mathbf{Z}_j(v_{j})-\mathbf{Z}_k(v_{k})\|_2\ts(v_{1},\dots,v_{K})\\
            =&\sum_{1\leq j<k\leq K}\sum_{v_{j},v_{k}}\|\mathbf{Z}_j(v_{j})-\mathbf{Z}_k(v_{k})\|_2\sum_{\text{w/o }v_{j},v_{k}}\ts(v_{1},\dots,v_{K})\\
            =&\sum_{1\leq j<k\leq K}\sum_{v_{j},v_{k}}\|\mathbf{Z}_j(v_{j})-\mathbf{Z}_k(v_{k})\|_2\mathcal{P}_{j,k}(\ts)(v_{j},v_{k})
        \end{aligned}
    \end{equation}
    We then consider the Gromov-Wasserstein distance pair in the MFGW distance in Eq.~\eqref{eq:mfgw1}, which can be reformulated as follows:
    \begin{equation}\label{eq:mgwd}
    \small
        \begin{aligned}
            &\!\!\!\sum_{\substack{1\leq j<k\leq K\\v_{j},v_{j}'\\ v_{k}, v_{k}'}}\!|\mathbf{C}_j(v_{j},v_{j}')-\mathbf{C}_k(v_{k},v_{k}')|^2\!\ts(v_{1},...,v_{K})\ts(v_{1}',...,v_{K}')\\
            =&\!\!\!\sum_{\substack{1\leq j<k\leq K\\v_{j},v_{j}'\\ v_{k}, v_{k}'}}\!\!\!|\mathbf{C}_j(v_{j},v_{j}')\!-\!\mathbf{C}_k(v_{k},v_{k}')|^2\!\!\!\sum_{\text{w/o }v_{j},v_{k}}\!\!\!\ts(v_{1},...,v_{K})\!\!\!\sum_{\text{w/o }v_{j}',v_{k}'}\!\!\!\ts(v_{1}',...,v_{K}')\\
            =&\!\!\!\sum_{\substack{1\leq j<k\leq K\\v_{j},v_{j}'\\ v_{k}, v_{k}'}}\!|\mathbf{C}_j(v_{j},v_{j}')-\mathbf{C}_k(v_{k},v_{k}')|^2\mathcal{P}_{j,k}(\ts)(v_{j},v_{k})\mathcal{P}_{j,k}(\ts)(v_{j}',v_{k}')
        \end{aligned}
    \end{equation}
    Combining Eqs.~\eqref{eq:mwd} and~\eqref{eq:mgwd}, the MFGW distance can be reformulated as follows:
    \begin{equation*}
    \small
        \begin{aligned}
            &\text{MFGW}_{2,\alpha}(\G_1,\G_2,\dots,\G_K)\\
            =&\!\sum_{1\leq j<k\leq K}\min_{\ts\in\Pi(\bm{\mu}_1,\dots,\bm{\mu}_K)}\!\!\left[\!(1-\alpha)\!\sum_{v_{j},v_{k}}\!\|\mathbf{Z}_j(v_{j})\!-\!\mathbf{Z}_k(v_{k})\|_2\mathcal{P}_{j,k}(\ts)(v_{j},v_{k})\right.\\
            +&\left.\alpha\!\sum_{v_{j},v_{k}\atop v_{j}',v_{k}'}|\mathbf{C}_j(v_{j},v_{j}')-\mathbf{C}_k(v_{k},v_{k}')|^2\mathcal{P}_{j,k}(\ts)(v_{j},v_{k})\mathcal{P}_{j,k}(\ts)(v_{j}',v_{k}')\right]\\
            =&\!\sum_{1\leq j<k\leq K}\text{FGW}_{2,\alpha}(\G_j,\G_k;\mathcal{P}_{j,k}(\ts))
        \end{aligned}
    \end{equation*}
    Note that $\mathcal{P}_{j,k}(\ts)\in\Pi(\bm{\mu}_j,\bm{\mu}_k)$ is a suboptimal coupling of $\text{FGW}_{2,\alpha}(\G_j,\G_k)$, that is:
    \begin{equation*}
    \small
        \text{FGW}_{2,\alpha}(\G_j,\G_k;\mathcal{P}_{j,k}(\ts))\geq \text{FGW}_{2,\alpha}(\G_j,\G_k)
    \end{equation*}
    Therefore, we can prove the proposed upper bound as follows:
    \begin{equation*}
    \small
        \begin{aligned}
            \text{MFGW}_{2,\alpha}(\G_1,\G_2,\dots,\G_K)&=\!\!\sum_{1\leq j<k\leq K}\!\!\text{FGW}_{2,\alpha}(\G_j,\G_k;\mathcal{P}_{j,k}(\ts))\\
            &\geq \!\!\sum_{1\leq j<k\leq K}\!\!\text{FGW}_{2,\alpha}(\G_j,\G_k)
        \end{aligned}
    \end{equation*}
\end{proof}

\vspace{-.5\baselineskip}
\section{Additional Experiments}\label{app:exp}
We carried out additional experiments to evaluate our proposed \name\ from different aspects.
\begin{figure*}[htbp]
    \centering
    \includegraphics[width = \textwidth,trim = 0 0 0 0,clip]{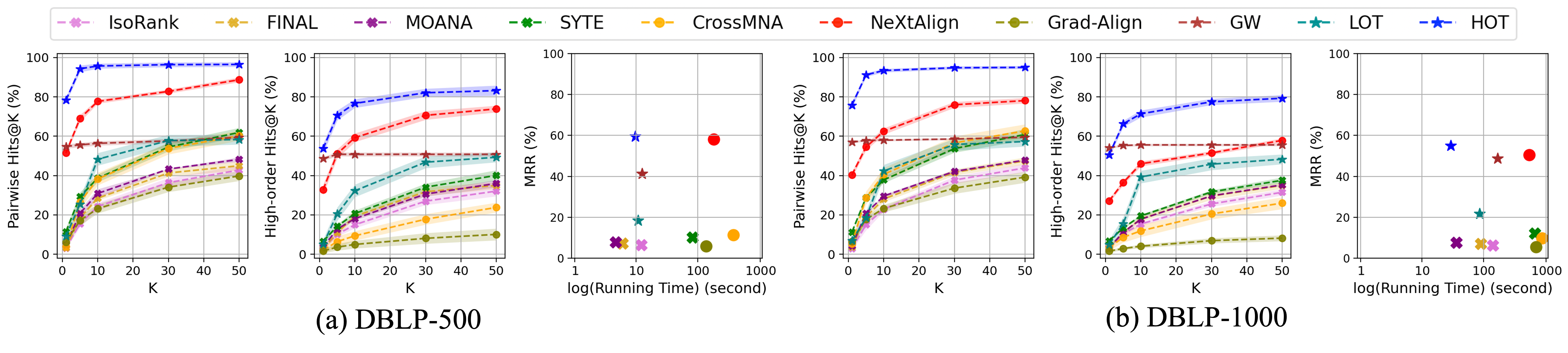}
    \vspace{-20pt}
    \caption{Alignment results on plain networks: (a) DBLP-500; (b) DBLP-1000.}\label{fig:plain-2}
\end{figure*}

\begin{figure*}[htbp]
    \centering
    \includegraphics[width = \textwidth,trim = 0 0 0 0,clip]{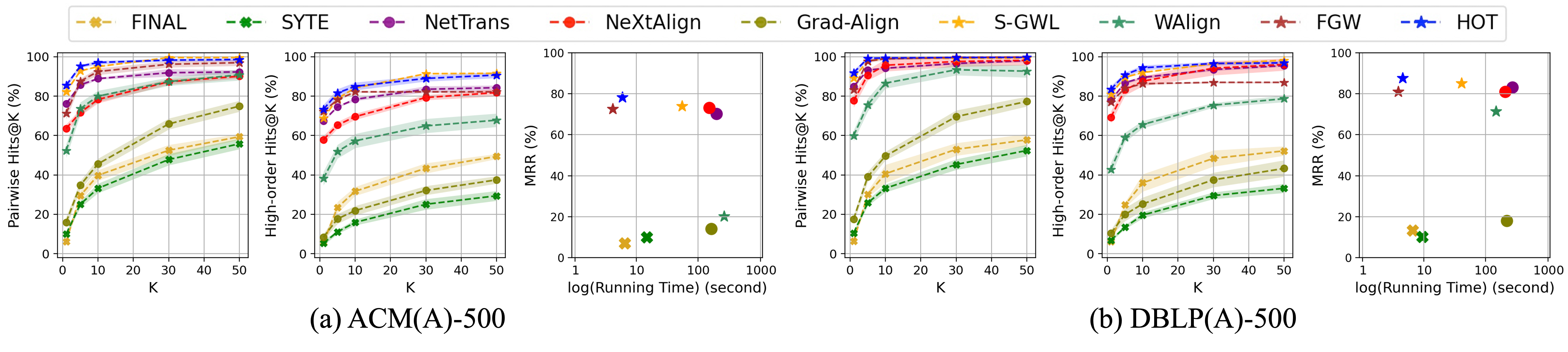}
    \vspace{-20pt}
    \caption{Alignment results on attributed networks: (a) ACM(A)-500; (b) DBLP(A)-500.}\label{fig:attributed-2}
\end{figure*}

\paragraph{More effectiveness results.}
We provide more experimental results on plain networks: DBLP-500/DBLP-1000, and attributed networks: DBLP(A)-500/ACM(A)-500. Results are shown in Figures~\ref{fig:plain-2} and \ref{fig:attributed-2}, respectively. Our proposed \name\ consistently outperforms all baselines on these datasets in both pairwise and high-order Hits@10 metrics. Comparing the performance of \name\ on DBLP-500 and DBLP(A)-500, \name\ achieves nearly 100\% pairwise Hits@{\em K} on DBLP-500 even without the node attributes, which is comparable to the performance on DBLP(A)-500. This validates that the position-aware cost tensor design extracts the essential positional information and is quite beneficial to plain network alignment.

We also test the proposed \name\ on larger graphs, including the plain DBLP and attributed DBLP networks with up to 6000 nodes. We set the cluster number as $M=\lceil\frac{n}{50}\rceil$. Experiment results are shown in Tables~\ref{tab:large-1} and~\ref{tab:large-2}. It is shown that the proposed \name\ can achieves excellent performance when aligning large multiple networks. Besides, as stated before, all baseline methods constructing a dense alignment tensor fail to handle all the large multi-network alignment problems (with more than 2000 nodes) due to OOM. 
\begin{table*}[]
\small
\begin{tabular}{@{}l|ccccc|ccccc|c@{}}
\toprule
\multirow{2}{*}{\#nodes} & \multicolumn{5}{c|}{Pairwise Hits}                   & \multicolumn{5}{c|}{High-order Hits}                 & \multirow{2}{*}{MRR} \\ \cmidrule(lr){2-11}
                         & @1       & @5       & @10      & @30      & @50      & @1       & @5       & @10      & @30      & @50      &                      \\ \midrule
2000                     & 78.0±1.3 & 91.2±1.0 & 92.9±1.0 & 94.2±0.9 & 94.5±0.9 & 55.1±2.1 & 67.8±1.6 & 71.7±1.5 & 76.8±1.7 & 78.2±1.7 & 60.7±1.8             \\
3000                     & 69.5±1.9 & 82.8±1.8 & 84.8±1.7 & 86.5±1.4 & 86.9±1.3 & 40.9±1.4 & 52.6±1.2 & 56.7±1.3 & 61.3±1.4 & 62.5±1.4 & 46.3±1.3             \\
4000                     & 65.1±0.8 & 80.7±0.9 & 83.2±0.8 & 85.0±0.8 & 85.5±0.8 & 38.0±0.6 & 49.6±0.8 & 54.0±1.0 & 59.6±0.7 & 61.4±0.8 & 43.4±0.7             \\
5000                     & 76.9±0.9 & 91.5±0.6 & 93.5±0.5 & 94.5±0.5 & 94.6±0.5 & 53.9±0.9 & 67.2±0.9 & 72.2±1.0 & 77.6±1.0 & 79.1±1.0 & 59.8±0.9             \\
6000                     & 64.7±0.9 & 81.1±0.8 & 83.7±0.7 & 85.6±0.6 & 86.1±0.6 & 37.5±0.8 & 49.3±0.8 & 54.2±0.8 & 60.3±1.0 & 62.3±0.9 & 42.9±0.8             \\ \bottomrule
\end{tabular}
\caption{Experiments on aligning larger DBLP plain networks.}
\label{tab:large-1}
\end{table*}

\begin{table*}[]
\small
\begin{tabular}{@{}l|ccccc|ccccc|c@{}}
\toprule
\multirow{2}{*}{\#nodes} & \multicolumn{5}{c|}{Pairwise Hits}                   & \multicolumn{5}{c|}{High-order Hits}                 & \multirow{2}{*}{MRR} \\ \cmidrule(lr){2-11}
                         & @1       & @5       & @10      & @30      & @50      & @1       & @5       & @10      & @30      & @50      &                      \\ \midrule
2000                     & 91.7±0.6 & 98.1±0.3 & 98.6±0.3 & 98.9±0.3 & 99.0±0.3 & 82.2±0.8 & 88.8±0.7 & 91.6±0.6 & 94.4±0.6 & 94.9±0.6 & 85.1±0.6             \\
3000                     & 91.0±0.4 & 97.0±0.4 & 97.4±0.4 & 97.6±0.4 & 97.7±0.4 & 78.6±0.7 & 85.5±0.8 & 88.0±0.8 & 89.6±0.6 & 89.9±0.6 & 81.6±0.7             \\
4000                     & 88.1±0.4 & 96.3±0.3 & 97.1±0.3 & 97.4±0.3 & 97.4±0.3 & 74.4±0.8 & 82.4±0.7 & 85.3±0.6 & 88.3±0.7 & 89.1±0.7 & 77.9±0.8             \\
5000                     & 88.9±0.3 & 97.8±0.2 & 98.7±0.2 & 99.0±0.2 & 99.1±0.2 & 78.2±0.4 & 86.5±0.4 & 90.1±0.4 & 93.8±0.5 & 94.6±0.5 & 81.9±0.4             \\
6000                     & 86.9±0.5 & 96.3±0.4 & 97.2±0.4 & 97.6±0.4 & 97.6±0.3 & 73.0±0.7 & 81.7±0.7 & 85.0±0.6 & 88.7±0.6 & 89.7±0.6 & 76.8±0.7             \\ \bottomrule
\end{tabular}
\caption{Experiments on aligning larger DBLP attributed networks.}
\label{tab:large-2}
\end{table*}

\paragraph{Effectiveness vs scalability.} 
We conducted an analysis to determine how the number of clusters ($M$) impacts the effectiveness and scalability of \name, and the results are shown in Figure~\ref{fig:perf}. In general, as the number of clusters $M$ increases, the alignment performance drops slightly, but the running time reduces dramatically. However, the alignment performance is still relatively stable when $M$ changes, with an average standard deviation of 2.6\%. The effect of $M$ on the alignment accuracy is two-fold. On one hand, increasing $M$ may result in aligned nodes being assigned to different clusters, which can be detrimental to the overall performance and may explain the decreasing trend of high-order Hits@10. On the other hand, if the cluster-level alignment is accurate, a higher value of $M$ can lead to a significant reduction in the search space, which may explain the intermediate performance increase.

Furthermore, comparing the running time with different $M$, the running time exhibits a huge drop when $M$ increases from 5 to 20, and stays relatively stable when $M$ further increases from 20 to 40. These results are consistent with our time complexity analysis in Section~\ref{sec:ana} and scalability results in Figure~\ref{fig:time}. Specifically, when $M$ is relatively small, the node-level alignment dominates the overall time complexity, resulting in an exponential decrease in running time. Conversely, when $M$ is relatively large, the cluster-level alignment dominates the overall time complexity, leading to a slow increase in running time.

\begin{figure}[!htbp]
    \centering
    \subfigure[ACM(A)-500]{\includegraphics[width = 0.48\linewidth, trim = 20 5 20 12, clip]{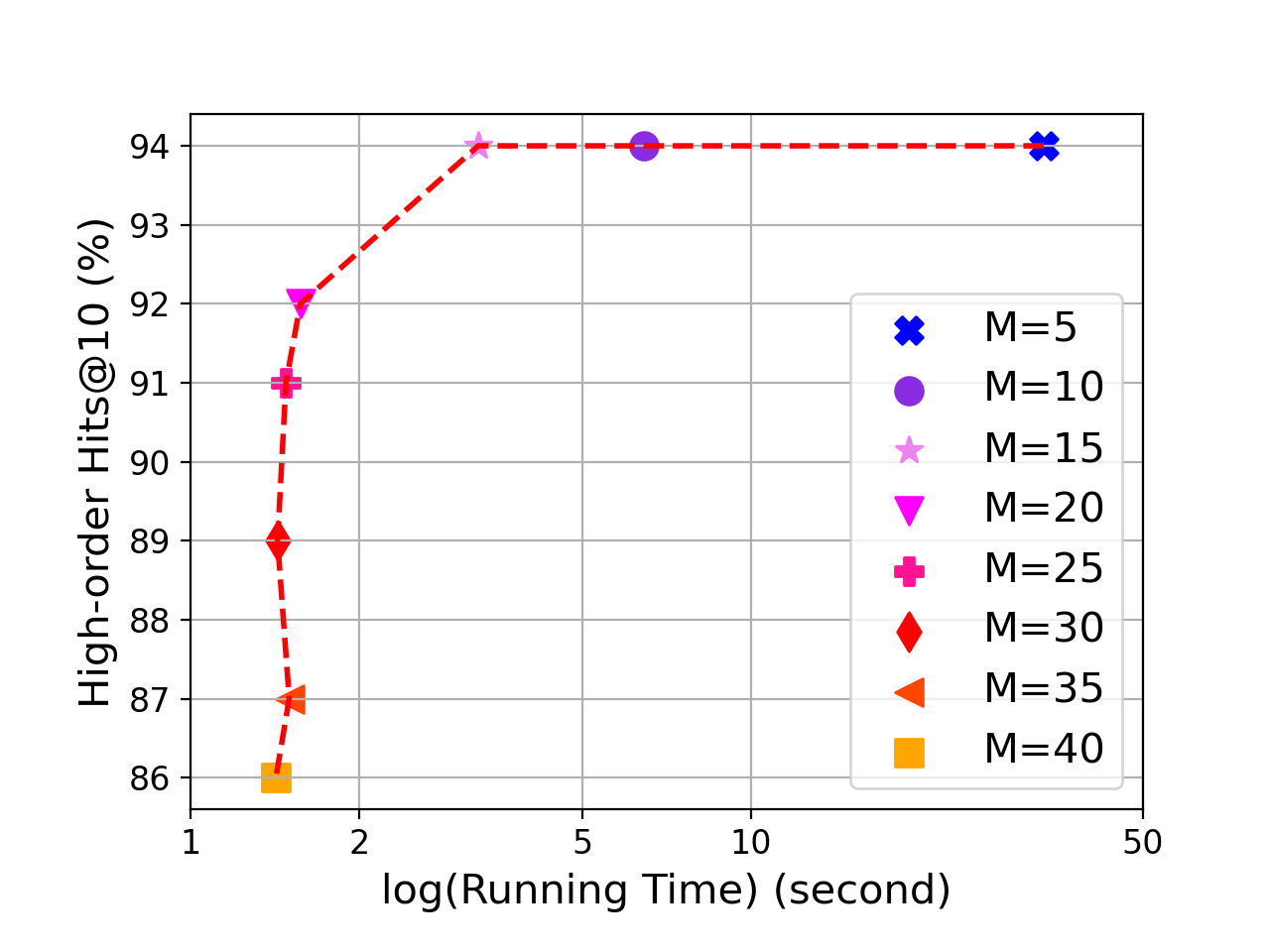}\label{fig:pair-M}}
    \subfigure[DBLP(A)-500]{\includegraphics[width = 0.48\linewidth, trim = 20 5 20 12, clip]{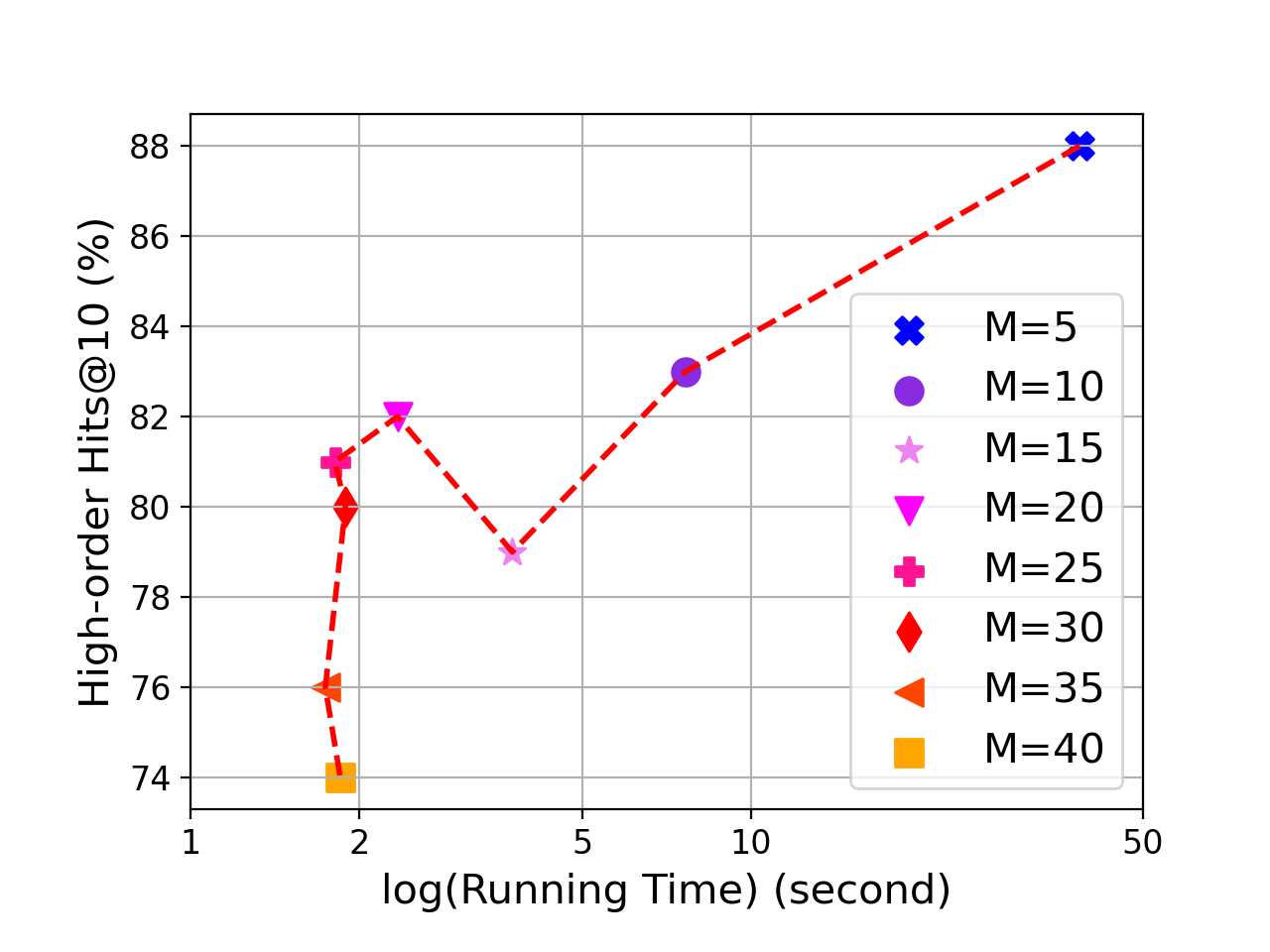}\label{fig:high-M}}
    \vspace{-10pt}
    \caption{High-order Hits@10 vs running time: (a) ACM(A)-500, (b) DBLP(A)-500. Number of clusters $M$ achieves a trade-off between effectiveness and scalability.}
    \label{fig:perf}
\end{figure}

\section{Reproducibility}\label{app:rep}
\begin{table*}[!htbp]
\centering
\begin{tabular}{@{}llllc@{}}
\toprule
Methods &Types &Scenarios &Settings &Hierarchical\\ \midrule
IsoRank~\cite{singh2008global} &consistency &plain &pairwise &\xmark\\
FINAL~\cite{zhang2016final} &consistency &plain\&attributed &pairwise &\xmark\\
MOANA~\cite{zhang2019multilevel} &consistency &attributed &pairwise &\cmark\\
SYTE~\cite{du2021sylvester} &consistency &plain\&attributed &multiple &\xmark\\ \midrule
CrossMNA~\cite{chu2019cross} &embedding &plain &multiple &\xmark\\
NetTrans~\cite{zhang2020nettrans} &embedding &attributed &pairwise &\xmark\\
NeXtAlign~\cite{zhang2021balancing} &embedding &plain\&attributed &pairwise &\xmark\\ 
Grad-Align~\cite{park2022grad} &embedding &attributed &pairwise &\xmark\\ \midrule
GW~\cite{memoli2011gromov}  &OT &plain &pairwise &\xmark\\
FGW~\cite{titouan2019optimal} & OT &attributed &pairwise &\xmark\\
S-GWL~\cite{xu2019scalable} &OT &attributed &pairwise &\cmark\\
LOT~\cite{scetbon2022linear} &OT &plain &pairwise &\xmark\\
WAlign~\cite{maretic2020wasserstein} &OT &attributed &pairwise &\xmark\\ \bottomrule
\end{tabular}
\caption{Baseline methods summary.}
\label{tab:base}
\end{table*}

\begin{table*}[!htbp]
    \setlength{\abovecaptionskip}{1mm}
    \setlength{\belowcaptionskip}{-5pt}
    \centering
    \begin{tabular}{@{}lllll@{}}
    \toprule
    Scenarios &Networks &\# nodes &\# edges &\# attributes\\ \midrule
    \multirow{4}{*}{Plain}
    &ER &500/500/500 &1813/2160/1982 &0\\
    &Douban &230/230/242 &336/344/356 &0 \\
    &DBLP-500 &500/500/500 &1710/1788/1785 &0 \\
    &DBLP-1000 &1000/1000/1000 &3627/3807/3807 &0 \\ \midrule
    \multirow{4}{*}{Attributed}
    &ACM(A)-500 &500/ 500/ 500 &1813/2160/1982 &17 \\
    &ACM(A)-1000 &1000/1000/1000 &3790/4539/4159 &17 \\
    &DBLP(A)-500 &500/500/500 &1710/1788/1785 &17 \\
    &DBLP(A)-1000 &1000/1000/1000 &3627/3807/3807 &17 \\ \bottomrule
    \end{tabular}
    \caption{Dataset summary.}
    \label{tab:dataset}
\end{table*}

\paragraph{Baseline methods descriptions.} A brief overview of baseline methods is given in Table~\ref{tab:base}. We categorize baseline methods based on three properties: method type (consistency-based, embedding-based and OT-based), alignment scenarios (plain and attributed networks), setting (pairwise and multi-network alignment) and whether it is a hierarchical approach or not.

\paragraph{Dataset descriptions.} The datasets used in our experiments include:
\begin{itemize}
    \item{ER:} Synthetic Erd\"{o}s-R\'{e}nyi random graphs. Each network is a permutation of the base network with noise added by first inserting 10\% edges and then removing 15\% edges.
    \item{Douban~\cite{zhong2012comsoc}:} Social networks modeling both online and offline activities. Nodes represent users and edges model friendships between two users. The original dataset contains 50k users and 5M edges. 
    \item{ACM~\cite{tang2008arnetminer}:} Co-authorship network of ACM Digital library. Nodes represent authors and an edge exists between two authors if they are co-author for at least one publication. Node attributes indicate the number of papers published in different venues by the node/author. The original dataset includes 9,916 nodes and 44,808 edges.
    \item{DBLP~\cite{tang2008arnetminer}:} Co-authorship network of DBLP bibliography. Nodes represent authors and an edge exists between two authors if they are co-author for at least one publication. Node attributes indicate the number of papers published in different venues by the node/author. The original dataset includes 9,872 nodes and 39,561 edges.
\end{itemize}
Owing to the high time and space complexities of many baseline methods, we randomly sample connected subgraphs with 500 and 1000 nodes from the ACM and DBLP dataset as the benchmark dataset. Detailed dataset statistics can be found in Table~\ref{tab:dataset}.

\paragraph{Machine configuration and code.} The proposed method is implemented in Python. Experiments are conducted on an Apple M1 CPU with 16 GB RAM and a NVIDIA Tesla V100 SXM2 GPU. We will release the source code and the datasets after the paper is published.

\section{Impacts, Limitations and Future Works}
In this paper, we study the multi-network alignment problem from the view of multi-marginal optimal transport. This work could benefit various downstream applications, including computer vision~\cite{swoboda2019convex}, high-order recommendation~\cite{man2016predict} and drug discovery~\cite{chen2016fascinate}. We emphasize that our work has no negative social impact and discuss the limitations and possible future directions as follows:

\begin{itemize}
    \item\verb|Multi-level alignment|: the current work only considers two level: node-level and cluster-level. A possible future direction is to generalize the current framework to multiple levels by recursively applying the FGW barycenter to form aligned clusters at multiple levels, through which the time and space complexities can be further reduced dramatically. 
    \item\verb|Leveraging supervision|: In this paper, we incorporate the supervision, i.e., anchor node sets, by generating a unified positional embedding based on RWR. Though quite effective, we can still leverage supervision from other aspects. For example, enforcing the one-hot alignment scores for anchor node sets based on the supervised optimal transport~\cite{cang2022supervised}, or adding alignment preference regularization~\cite{zeng2023parrot} to penalize the inconsistency between alignment results and supervision.
    \item\verb|Accelerating with low-rank approximation|: Although the proposed \name\ achieves an exponential reduction in both time and space complexities via the hierarchical approach, but the complexity is still exponential w.r.t. number of graphs. We may reduce the time and space complexities by exploring the low-rank property of graph data based on the low-rank Sinkhorn factorization~\cite{scetbon2021low}.
\end{itemize}

\end{document}